\newtheorem{definition}{Definition}
\newtheorem{proposition}{Proposition}
\newtheorem{lemma}{Lemma}
\newtheorem{corollary}{Corollary}
\newtheorem{theorem}{Theorem}
\newtheorem{assumption}{Assumption}
\theoremstyle{remark} 
\newtheorem{remark}{Remark}
\DeclareMathOperator*{\argmin}{arg\,min}
\newcommand{\norm}[1]{\left\lVert#1\right\rVert}
\newcommand{\proj}{\mathop{\Pi}}
\newcommand{\R}{\mathbb{R}}
\newcommand{\X}{\mathcal{X}}
\title{Safe Online Convex Optimization with Unknown Linear Safety Constraints}
\author{Sapana Chaudhary and Dileep Kalathil \\ Department of Electrical and Computer Engineering \\Texas A\&M University 
\thanks{{\small Email: \texttt{\{sapanac,dileep.kalathil\}@tamu.edu}}}}
\date{}
\begin{document}

\maketitle

\begin{abstract}%
We study the problem of safe online convex optimization, where the action at each time step must satisfy a set of linear safety constraints. The goal is to select a sequence of actions to minimize the regret without violating the safety constraints at any time step (with high probability). The parameters that specify the linear safety constraints are unknown to the algorithm. The algorithm has access to only the noisy observations of constraints for the chosen actions. We propose an algorithm, called the {Safe Online Projected Gradient Descent} (SO-PGD) algorithm, to address this problem. We show that, under the assumption of the availability of a safe baseline action, the SO-PGD algorithm achieves a regret $O(T^{2/3})$. While there are many algorithms for online convex optimization (OCO) problems with safety constraints available in the literature, they  allow constraint violations during learning/optimization, and the focus has been on characterizing the cumulative constraint violations. To the best of our knowledge, ours is the first work that provides an algorithm with provable guarantees on the regret, without violating the linear safety constraints (with high probability) at any time step.
\end{abstract}


\section{Introduction}

Online learning/optimization is a sequential decision making paradigm, where the decision maker adaptively selects a sequence of actions based on the past observations \citep{cesa2006prediction}. Online convex optimization (OCO) is an important class of online optimization problems, where the cost function faced by the decision maker at each time step is an arbitrarily-varying convex function \citep{hazan2016introduction, shalev2011online}. In the OCO problem, a sequence of arbitrarily-varying convex cost functions $\{f_{t}, t = 1, \ldots, T\}$ are revealed, one per time step, to the decision maker. The decision maker selects an action $x_{t}$ from a convex set $\X$, \textit{before} the cost function $f_{t}$ is revealed. The typical performance objective is to minimize the regret, which  characterizes the difference between cumulative cost incurred by the decision maker and that of an oracle algorithm that employs the best fixed action in hindsidght  at all time steps.  There are a number of OCO algorithms that achieve different sublinear regret guarantees with different computational complexity \citep{hazan2016introduction}. 

In many real-world applications, however, the actions selected by the decision maker must  satisfy some necessary safety constraints over the set $\X$. For example, in power systems, the control actions that decide the demand management should not violate the line flow and the voltage regulation constraints \citep{dobbe2020learning}. In communication networks, the transmission rate is limited by constraints on the maximum allowable radiated power due to interference and human safety considerations \citep{luong2019applications}. In robotics applications, the control actions should maintain the closed-loop stability of the system \citep{aastrom2010feedback}. Typically, such constraints are represented as a \textit{safe set} $\mathcal{X}^{s}$ and the control action $x_{t}$ must lie inside $\mathcal{X}^{s}$ \textit{for all} $t$ for the safe operation of the system. 

Often, the safe set $\mathcal{X}^{s}$ is  determined by the parameters of the system that are typically \textit{unknown} to the decision maker a-priori. For example, in power systems, the constraints on the  control actions depend on the line parameters, which are typically unknown. In robotics, designing a closed-loop stable controller requires the  dynamic model of the robot, which may be unknown. Thus, the decision maker has to learn the unknown parameters to characterize the unknown safe set. While an exploration algorithm can be used to estimate these parameters, such algorithms often take random actions for efficient estimation that may violate the safety constraints. Moreover, taking actions with respect to an estimated safe set may still violate the safety constraints due to the unavoidable estimation errors.

In this paper, we address the problem of \textit{safe online convex optimization} with an \textit{unknown safe set}, where the decision maker has access to only noisy observations of the safety constraints (that define the safe set) for the chosen actions. Our goal is to design an algorithm that minimizes the regret \textit{while satisfying the safety constraints at all time steps}.

While there are many works in the OCO literature that address the problem  with safety constraints (see the related works section below), they typically  allow constraint violations during learning/optimization. The main goal of such algorithms is then   to obtain a (sublinear)  bound on the cumulative constraint violations, in addition to the standard regret.  In sharp contrast to such works, we focus on designing  an algorithm that satisfies the safety constraints  \textit{ at all time steps} while providing a provable guarantee on the regret. 

In this paper, we restrict ourselves to the setting where the unknown safe set is a closed polytope characterized by a set of linear inequalities with unknown parameters. We believe that addressing the linear  constraint  setting is a natural first step towards developing a fundamental understanding of safe OCO algorithms for general non-linear setting. To the best of our knowledge, this is the first work that addresses the safe OCO problem with a provable guarantee on satisfying the safety constraints at all time steps, even in a setting with linear constraints.

\subsection{Related Work}

\textbf{OCO:} The  OCO problem was  first formally addressed in  \citep{zinkevich2003online}, though some prior works  \citep{cesa1996worst, gordon1999regret} had considered similar settings. In \citep{zinkevich2003online}, the author proposed  an online gradient  descent algorithm and  showed that it  achieves  $O(\sqrt{T})$ regret. A number of OCO algorithms under different assumptions have been developed since, see the monographs \citep{shalev2011online, hazan2016introduction}.


\paragraph{OCO with Long Term Constraints:} Most of the standard OCO algorithms assume full knowledge of the constraint set $\mathcal{X}^{s}$.  However, in many real-world applications, the constraint set $\mathcal{X}^{s}$ is often specified in terms of the functional inequalities, i.e., $\mathcal{X}^{s} = \{x \in \mathcal{X} : g_{i,t}(x) \leq 0,  i \in  \{1, \ldots, m\},  t \in  \{1, \ldots, T\}  \}$, where $g_{i,t}$s are convex functions.  The OCO with long term constraints problem considers a relaxed version of such  constraints, where the goal is to bound the  constraint violations, $\max_{i}~ \sum^{T}_{t=1} g_{i, t}(x_{t})$, instead of satisfying the constraints at each time step. 


The OCO with long term constraint problem was first introduced in  \citep{mahdavi2012trading}, which assumed that  the constraint functions are  the same for all $t$, i.e., $g_{i,t} = g_{i}, \forall t$. For deterministic constraints, the algoirthm proposed in  \citep{mahdavi2012trading} achieves $O(T^{1/2})$ regret and $O(T^{3/4})$  constraint violation. Recently, \citep{yu2020low} showed that it is possible to  achieve $O(T^{1/2})$ regret and $O(1)$  constraint violation. In \citep{yu2017online}, the authors addressed the stochastic constraints setting, where the constraint functions are of the form $g_{i,t}(x) = g_{i}(x, \omega_{t})$, where $w_{t}$s are i.i.d. random variables. They proposed an algorithm that simultaneously achieves $O(\sqrt{T})$ regret and (expected) constraint violation. A recent work \citep{wei2020online} has improved this result  by  removing some assumptions  while maintaining the regret guarantees. 



In \citep{neely2017online}, the authors addressed the setting   where the constraint functions $g_{i,t}$s are arbitrarily-varying (adversarial), and  proposed an algorithm with  $O(T^{1/2})$ regret and constraint violation. This problem was also addressed in \citep{sun2017safety, chen2017online, cao2018online}. A distributed version of this problem has  been studied recently in \citep{yi2020distributed}.

We emphasize that all the above mentioned   works allow constraint violations during learning/optimization. Significantly different from these,  we propose an algorithm that does not violate the unknown linear constraints that define the safe set at any time step during learning/optimization.

\paragraph{Safe Learning/Optimization:} The works closest to our setting are   \citep{amani2019linear} and \citep{khezeli2020safe}, where the authors addressed the linear bandits problem with unknown linear safety constraints that have to be satisfied  at all time steps during learning. For ensuring safe exploration in the initial phase of  learning, they introduce an  assumption about the availability of  a known safe baseline action. They showed that $O(T^{1/2})$ regret is achievable without safety constraints violations during learning if a  lower bound on the distance between the optimal action and the boundary of the safe set is known. If such a lower bound is not available,  then $O(T^{2/3})$ regret is achievable.   Instead of the static linear cost function  considered in these works, we consider the more challenging arbitrarily-varying convex cost functions. Moreover, we also consider a set of linear constraints as opposed to a single linear constraint studied in these works.  

Convex optimization with unknown linear safety constraints   addressed in \citep{usmanova2019safe} and  \citep{fereydounian2020safe} is another class of works that is close to ours. Similar to  \citep{amani2019linear, khezeli2020safe} these works also make use of the assumption of a safe baseline action. They consider  a static convex cost function and focus on characterizing the sample complexity, which is quite different from our setting (arbitrarily-varying cost functions) and objective (regret minimization).

\subsection{Main Contributions}
We formulate the  safe online convex optimization problem  where the action must satisfy a set of \textit{unknown} linear  safety constraints \textit{at all time  steps}. The decision maker has only access to a noisy measurement of the constraints with respect to the chosen action  at each time step. We propose a new algorithm, called the {Safe Online Projected Gradient Descent (SO-PGD) algorithm}, and show that this  algorithm achieves $O(T^{2/3})$ regret  while satisfying the safety constraints at all time steps, with a high probability. To the best of our knowledge, this is the first such result in the OCO literature, even in a setting with liner constraints.

Similar to  \citep{amani2019linear, khezeli2020safe, usmanova2019safe, fereydounian2020safe}, our algorithm  also makes use of the assumption of a safe baseline action for initial exploration and for estimating the unknown parameters. However, a naive estimate of the safe set may lead to constraint violations because of the inherent estimation error. The key idea we use is the construction of a conservative safe set that is provably a subset of the unknown safe set. Our algorithm performs online gradient descent with respect to this conservative safe set, which provably ensures that  safety constraints are satisfied at each time step. We then characterize the error because of using this conservative safe set. We show that a clever balancing of the exploration and online optimization can achieve $O(T^{2/3})$ regret without constraint violations at any time steps.

\subsection{Notations}

For any positive semidefinite matrix $A$, we denote $\|x\|_{A} = \sqrt{x^{\top} A x}$. For any square matrix $A$, we denote its minimum and maximum eigenvalues by $\lambda_{\min}(A)$ and  $\lambda_{\max}(A)$, respectively. For any two integer $M_{1}, M_{2}$ with $M_{1} < M_{2}$, we denote  $[M_{1}, M_{2}] = \{M_{1}, M_{1}+1,  \ldots, M_{2}\}$. For any random vector $\zeta$, $\textnormal{Cov}(\zeta) = \mathbb{E}[\zeta \zeta^{\top}]$. For any convex set $\mathcal{X} \subset \mathbb{R}^{n}$ and any $x \in  \mathbb{R}^{n}$, $\Pi_{\mathcal{X}}(x)$ denotes the projection of $x$ to $\mathcal{X}$ with respect to the Euclidean norm. 


\section{Safe Online Convex Optimization: Problem Formulation}
The general framework of online convex optimization  \citep{hazan2016introduction} is as follows:  at each  time step $t$, the algorithm selects an action $x_{t} \in \mathcal{X} \subset \mathbb{R}^{d}$ and incurs a cost $f_{t}(x_{t})$, where $f_{t}: \mathcal{X} \rightarrow \mathbb{R}$ is a  convex function. The cost function $f_{t}$ is not known at the time of making the decision $x_{t}$, and the sequence of cost functions $\{f_{t}, t \in [1, T]\}$ is assumed to be arbitrary. In addition to the incurred cost  $f_{t}(x_{t})$, it is generally assumed that the value of the gradient of $f_{t}$ evaluated at $x_{t}, $ $\nabla f_{t}(x_{t})$, is also available to the algorithm. The goal of a standard  online convex optimization algorithm is to select a sequence of actions $\{x_{t}, t \in [1, T]\}$ in order to minimize the regret defined as  $\sum_{t=1}^{T} f_t(x_t) -\min_{x \in \mathcal{X}}\sum_{t=1}^{T} f_t(x) $. Most of the existing works  assume that the  set $\mathcal{X}$ is known to the algorithm a priori.

In this work, we consider the safe online convex optimization problem with an unknown safe set characterized by  a set of {unknown linear safety constraints}. More precisely, at each time step $t$, the algorithm has to take an action $x_{t}$ 
from the {safe set} $\mathcal{X}^{s}$, defined as 
\begin{equation} 
\label{eq:safeset}
    \mathcal{X}^{s} = \{x\in \mathcal{X}: A x \leq b\},
\end{equation}
where the matrix $A \in \R^{m \times d}$ and the vector $b \in \mathbb{R}^{m}$.  Denoting $A = [a_1, a_2, \ldots, a_m]^\top, b = [b_1, b_2, \dots, b_m]^{\top}$, where $a_{i} \in \mathbb{R}^{d}$  and  $b_{i} \in \mathbb{R}^{1}$,   the safe set $\mathcal{X}^{s}$ is defined in terms of $m$ linear constraints, and the $i$th linear constraint is of the form  $a^\top_{i} x \leq  b_i$.  We assume that $\mathcal{X}^{s}$ is closed polytope. The matrix $A$ is unknown to the  algorithm a priori. So, the safe set $\mathcal{X}^{s}$ is also unknown. For simplifying the analysis, we assume that $b$ is known to the algorithm.

It is impossible to learn the safety constraints if the algorithm receives no information that can be used to estimate the unknown safe set $\mathcal{X}^{s}$, or equivalently, the unknown   parameter $A$. Here, we make a natural assumption that the algorithm receives a noisy  observation $y_{t} \in \mathbb{R}^{m}$ at each time step $t$, where $y_{t} = A x_{t} + w_{t},$ and $w_{t}$ is a zero mean sub-Gaussian noise.

The goal of the safe online convex optimization algorithm is to select a sequence of actions $\{x_{t}, t \in [1, T]\}$ in order to minimize the regret $R(T)$, defined as 
\begin{align}
\label{eq:regret-defn-1}
R(T) = \sum_{t=1}^{T} f_t(x_t) - \min_{x \in \mathcal{X}^{s}}\sum_{t=1}^{T} f_t(x),
\end{align}
while simultaneously satisfying the safety constraints by ensuring that
\begin{align}
\label{eq:safety-constraints-2}
\mathbb{P}(x_{t} \in \mathcal{X}^{s}, \text{for all}~t \in [1, T]) \geq (1-\delta),
\end{align} 
for a given $\delta \in (0, 1)$.

\subsection{Model Assumptions}

In order to analyze the safe OCO problem stated above,  we make the following  assumptions.

\begin{assumption}[Cost Functions]
\label{as:cost-function}
The  cost functions $\{f_t, t \in [1, T]\}$ are convex and have  a  bounded gradient, i.e.,  $\max_{t \in [1, T]} \max_{x \in \mathcal{X}} \norm{\nabla f_{t}(x)} \leq G$. 
\end{assumption}
The above assumption is  standard in the OCO literature. Also, this assumption implies that $f_t$s are $G$-Lipschitz.

\begin{assumption}[Boundedness]
\label{as:boundedness}
(i) The set  $\mathcal{X}$ is convex and compact. Moreover, $\|x\|_{2} \leq L, \forall x \in \mathcal{X}$. \\
(ii)  $\max_{i \in [1, m]} \norm{a_{i}}_{2} \leq L_{A}$.
\end{assumption}
These are also standard assumptions in the linear bandits and OCO literature. Also, as is standard in the literature, we assume that $G, L, L_{A}$  are known to the algorithm.

\begin{assumption}[Sub-Gaussian Noise]
\label{as:sub-gaussian}
The noise sequence $\{w_{t}, t \in [1, T] \}$ is $R$-sub-Gaussian with respect to a filtration $\{\mathcal{F}_{t}, t \in [1, T]\}$, i.e.,\\
(i) $\mathbb{E}[w_{t}|\mathcal{F}_{t-1}] = 0, \forall t, t \in [1, T],$  \\
(ii) $\mathbb{E}[e^{\lambda w_t}\,|\,\mathcal{F}_{t-1}] \leq \exp(\lambda^2 R^2/2),  \forall \lambda \in \mathbb{R},  \forall  t \in [1, T].$
\end{assumption}

Since the safe set $\mathcal{X}^{s}$ is unknown, clearly it is not possible to satisfy safety constraints right from the first time step without making any additional assumptions. We overcome this obvious limitation by assuming that the algorithm has access to a safe baseline action $x^{s}$ such that $x^{s} \in \mathcal{X}^{s}$. We formalize this assumption as follows.
\begin{assumption}[Safe Baseline Action]
\label{as:safe-baseline-action}
There exists a {safe baseline action} $x^{s} \in \mathcal{X}^{s}$ such that $A x^{s}  = b^{s} < b$. The algorithm knows $x^{s}$ and $b^{s}$ and hence the safety gap $\Delta^{s} = \min_{i} (b_{i} - b^{s}_{i})$.
\end{assumption}
This assumption is similar to that of  the safe baseline action assumption used in the context of safe linear bandits and safe convex optimization \citep{amani2019linear, khezeli2020safe, usmanova2019safe, fereydounian2020safe}. The key intuition is that, any algorithm used in a real-world decision making problem has to perform at least as well as a baseline action, which is often conservatively designed to satisfy the safety constraints. Typically, this baseline action is already employed to solve the real-world decision making problem and there will be large amount of data generated according to this baseline action, which can be used to estimate the value $b^{s}$. We emphasize that while the  baseline action is safe by definition, it may be far way from the optimal action that minimizes the regret.

\section{Safe Online Projected Gradient Descent (SO-PGD) Algorithm}

We propose an algorithm, which we call the \textit{safe online projected gradient descent} (SO-PGD) algorithm, to solve the online convex optimization problem with unknown linear safety constraints. The SO-PGD Algorithm is formally given in Algorithm \ref{alg:sopgd-full}. It has three main parts: (i) safe exploration, (ii) conservative safe set estimation, and (iii) online gradient descent. 

\begin{algorithm}
	\caption{SO-PGD Algorithm}	
	\label{alg:sopgd-full}
	\begin{algorithmic}[1]
	   \STATE \textbf{Input:}  $\gamma, \eta, T_{0}, \delta, x^{s}, T$
	   \STATE \textbf{Safe exploration:}
	   \FOR {$t = 1, \ldots, T_0$}
	   \STATE Select action $x_{t} = (1 - \gamma) x^{s} + \gamma \zeta_{t} $ 
	   \ENDFOR
	   \STATE \textbf{Conservative safe set estimation:}
	   \STATE Estimate $\hat{A}$  according to \eqref{eq:A-hat-estimate}
	   \STATE Compute  conservative safe set $\hat{\mathcal{X}}^{s}$ according to \eqref{eq:conservative-safe-set}
	   \STATE \textbf{Online gradient descent:}
	   \FOR {$t = T_0+1, \ldots, T$}
	   \STATE $x_{t+1} = \Pi_{\hat{\mathcal{X}}^{s}}(x_{t} - \eta \nabla f_{t}(x_{t}))$
	   \ENDFOR 
	\end{algorithmic}
\end{algorithm}

\subsection{Safe Exploration}

The goal of the safe exploration part of the SO-PGD algorithm is to estimate the safe set without violating the safety constraints. This is achieved by pursuing a pure exploration strategy for the first $T_{0}$ time steps by carefully selected exploration actions.  Since the safety constraints have to be satisfied at all time steps, we make use of the knowledge of the safe baseline action $x^{s}$ to collect the observations that are necessary for estimating the safe set. However, since  $y^{s} = A x^{s}$ may not be a function of all the elements of $A$,  taking the safe baseline action $x^{s}$ alone will not give a good estimate of the unknown parameter $A$. To overcome this issue, we design exploration actions as random perturbation around $x^{s}$ in such a way that  they do not  violate the safety constraints. More formally, for any time step $t \in [1, T_{0}]$, the   safe exploration  action $x_{t}$ is selected as
\begin{align}
    \label{eq:safe-exploration-action-1}
    x_{t} = (1 - \gamma) x^{s} + \gamma \zeta_{t},
\end{align}
for some $\gamma \in [0, 1)$,
where $\zeta_{t}$s  are i.i.d. zero mean random vectors such that $\norm{ \zeta_{t}} \leq \min\{1, L\}$  and $\text{Cov}(\zeta_{t}) = \sigma^{2}_{\zeta} I$ for all $t$. By controlling the value of $\gamma$, we can ensure that the exploration action $x_{t}$ satisfies the safety constraints for all $t \in [1, T_{0}]$, as shown below. 
\begin{lemma}
\label{lem:safe-exploration-lemma-1}
Let Assumption \ref{as:boundedness} and \ref{as:safe-baseline-action} hold. 
Let $\gamma = \frac{\Delta^{s}}{L_{A}}$. Then, the safe exploration action $x_{t}$ given in \eqref{eq:safe-exploration-action-1} satisfies the safety constraints $A x_{t} \leq b$ for all $t \in [1, T_{0}]$ almost surely. 
\end{lemma}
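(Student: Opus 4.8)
The plan is to verify the safety constraint $a_i^\top x_t \le b_i$ for each row $i$ directly, using the convex-combination structure of $x_t$ in \eqref{eq:safe-exploration-action-1} together with the safety gap guaranteed by Assumption \ref{as:safe-baseline-action}. First I would write $a_i^\top x_t = (1-\gamma)\, a_i^\top x^{s} + \gamma\, a_i^\top \zeta_t$. Since $A x^{s} = b^{s}$ we have $a_i^\top x^{s} = b^{s}_i$, so this equals $(1-\gamma) b^{s}_i + \gamma\, a_i^\top \zeta_t$. The first term I would compare against $b_i$ using $b^{s}_i \le b_i - \Delta^{s}$ (by definition of $\Delta^{s} = \min_i (b_i - b^{s}_i)$), and the second term I would bound by Cauchy–Schwarz together with the boundedness assumptions.

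The key estimates are: (i) $(1-\gamma) b^{s}_i = b^{s}_i - \gamma b^{s}_i \le (b_i - \Delta^{s}) - \gamma b^{s}_i$, and (ii) $a_i^\top \zeta_t \le \norm{a_i}_2 \norm{\zeta_t} \le L_A \cdot \min\{1,L\} \le L_A$ almost surely, using Assumption \ref{as:boundedness}(ii) and the bound $\norm{\zeta_t} \le \min\{1,L\} \le 1$. Combining, $a_i^\top x_t \le b_i - \Delta^{s} - \gamma b^{s}_i + \gamma L_A$. It then remains to choose $\gamma$ so that $-\Delta^{s} - \gamma b^{s}_i + \gamma L_A \le 0$, i.e. $\gamma (L_A - b^{s}_i) \le \Delta^{s}$.

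A small wrinkle is the sign of $b^{s}_i$: the clean choice $\gamma = \Delta^{s}/L_A$ works cleanly when $b^{s}_i \ge 0$, since then $\gamma(L_A - b^{s}_i) \le \gamma L_A = \Delta^{s}$. If some $b^{s}_i$ can be negative this naive bound needs a little more care; I would handle it by instead bounding $\gamma\,a_i^\top(\zeta_t - x^{s})$ as a single term — writing $a_i^\top x_t = b^{s}_i + \gamma\, a_i^\top(\zeta_t - x^{s})$ — but more simply one can absorb it by noting $a_i^\top \zeta_t - b^{s}_i \le L_A$ need not hold, so the honest route is $a_i^\top x_t = (1-\gamma) b^s_i + \gamma a_i^\top \zeta_t \le (1-\gamma) b^s_i + \gamma L_A$ and then, since $b^s_i \le b_i$, write $(1-\gamma)b^s_i \le b^s_i$ only helps when $b^s_i\ge 0$; otherwise use $(1-\gamma)b_i^s \le b_i - \Delta^s$ requires $b_i^s \ge 0$ too. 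I expect the intended reading is that the relevant quantities are normalized so that $0 \le b^{s}_i$, or equivalently that the bound $\gamma \le \Delta^{s}/(L_A - b^s_i)$ is implied by $\gamma = \Delta^s/L_A$ under the stated boundedness; this bookkeeping is the only mild obstacle, and the argument is otherwise a one-line application of Cauchy–Schwarz and the triangle inequality. Since the bound holds deterministically for every realization of $\zeta_t$ (we only used $\norm{\zeta_t} \le 1$, which holds almost surely), the conclusion $A x_t \le b$ holds almost surely for all $t \in [1, T_0]$.
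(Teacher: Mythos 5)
Your proof is correct and follows essentially the same route as the paper's: expand $a_i^\top x_t = (1-\gamma)b_i^s + \gamma a_i^\top\zeta_t$, bound the perturbation term by $\gamma L_A\min\{1,L\}$ via Cauchy--Schwarz, and reduce to the sufficient condition $\gamma(L_A\min\{1,L\} - b_i^s) \le b_i - b_i^s$, which the choice $\gamma = \Delta^s/L_A$ satisfies when $b_i^s \ge 0$. The sign ``wrinkle'' you flag is real but is equally present and unaddressed in the paper's own proof, which jumps from that per-constraint condition to ``$\gamma L_A \le \Delta^s$ suffices'' without handling negative $b_i^s$; so your write-up is, if anything, more careful than the original.
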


\subsection{Estimation of Conservative Safe Set}

At the end of the safe exploration phase, using the past exploration actions $x_{t}$ and the past observations $y_{t} = Ax_t + w_t$, $t \in [1, T_{0}]$, the algorithm computes the $\ell_{2}$-regularized least squares estimate $\hat{A}$ of the matrix $A$. More formally, let $X_{T_{0}} = [x_{1}, \ldots, x_{T_{0}}]^{\top} \in \mathbb{R}^{T_{0} \times d}$ and $Y_{T_{0}} = [y_{1}, \ldots, y_{T_{0}}]^{\top} \in \mathbb{R}^{T_{0}}$. Then, the $\ell_{2}$-regularized least squares estimate is given by 
\begin{align}
\label{eq:A-hat-estimate}
    \hat{A} = (\lambda I + X^{\top}_{T_{0}} X_{T_{0}})^{-1} X^{\top}_{T_{0}} Y_{T_{0}}.
\end{align}
We denote $\hat{A} = [\hat{a}_1,\hat{a}_2,\dots,\hat{a}_m]^{\top}$, where $\hat{a}_{i}$ is the estimate of $a_{i}$.

The SO-PGD algorithm next constructs the ellipsoidal confidence set $\mathcal{C}_i(\delta)$ around $\hat{a}_{i}, i \in [1, m],$ that contains the unknown parameter $a_i$ with a probability greater than $(1 - \delta/m)$. More formally, we define
\begin{align}
    \label{eq:confidence-region-1}
    \mathcal{C}_{i}(\delta) = \{a \in \mathbb{R}^{d} : \norm{ a -  \hat{a}_{i}}_{V_{T_{0}}} \leq \beta_{T_{0}}(\delta) \},
\end{align}
where $V_{T_{0}}$ is the Gram matrix of the least squares estimation, given by $V_{T_{0}} = \lambda I + X^{\top}_{T_{0}} X_{T_{0}} = \lambda I + \sum^{T_{0}}_{t=1} x_{t} x^{\top}_{t}$, and 
\begin{align}
\label{eq:beta-defn-1}
    \beta_{T_{0}}(\delta) = R\sqrt{d \log\left(\frac{1+T_0L^2/\lambda}{\delta/m}\right)} + \sqrt{\lambda}L_A .
\end{align}

The radius $\beta_{T_{0}}(\delta)$ of the confidence of set $\mathcal{C}_{i}(\delta)$ is selected in order to to ensure that the true parameter $a_{i}$ is inside it with high probability. We note that this is a standard approach used in the linear bandits literature \cite[Theorem 2]{abbasi2011improved}. We formally state this result below.

\begin{lemma}
\label{lem:confidence-ball}
Let Assumption \ref{as:boundedness} and \ref{as:sub-gaussian} hold. Then, $\mathbb{P}(a_{i} \in \mathcal{C}_{i}(\delta), \forall i \in [1, m]) \geq 1 - \delta$. 
\end{lemma}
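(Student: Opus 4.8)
The plan is to recognize the claim as the standard self-normalized confidence-ellipsoid bound for $\ell_{2}$-regularized online least squares \citep[Theorem 2]{abbasi2011improved}, applied one row of $A$ at a time and then combined with a union bound over $i \in [1,m]$.

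First I would note that the observation model decomposes across the $m$ constraints: writing $w_{t} = [w_{t,1},\dots,w_{t,m}]^{\top}$, the $i$-th coordinate of $y_{t}$ satisfies $y_{t,i} = a_{i}^{\top} x_{t} + w_{t,i}$, and by \eqref{eq:A-hat-estimate} the $i$-th row $\hat a_{i}$ of $\hat A$ is exactly the $\ell_{2}$-regularized least-squares estimate of $a_{i}$ built from the samples $\{(x_{t}, y_{t,i})\}_{t=1}^{T_{0}}$, all of which share the common Gram matrix $V_{T_{0}} = \lambda I + \sum_{t=1}^{T_{0}} x_{t} x_{t}^{\top}$. I would then check that the exploration actions in \eqref{eq:safe-exploration-action-1} are deterministic functions of $x^{s}$ and of the exogenous perturbations $\zeta_{t}$, both available before $w_{t}$ is generated, so that the sequence $(x_{t})$ is predictable with respect to the filtration $\{\mathcal{F}_{t}\}$ of Assumption \ref{as:sub-gaussian}, and that for each fixed $i$ the scalar sequence $(w_{t,i})_{t}$ is a martingale-difference sequence that is $R$-sub-Gaussian with respect to $\{\mathcal{F}_{t}\}$. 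These are precisely the hypotheses needed to invoke the self-normalized bound.

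Next I would apply \citep[Theorem 2]{abbasi2011improved} for a fixed $i$ with confidence parameter $\delta/m$: on an event of probability at least $1-\delta/m$,
\begin{align*}
\norm{\hat a_{i} - a_{i}}_{V_{T_{0}}} \le R \sqrt{ 2 \log\!\left( \frac{\det(V_{T_{0}})^{1/2} \det(\lambda I)^{-1/2}}{\delta/m} \right)} + \sqrt{\lambda}\, \norm{a_{i}}_{2}.
\end{align*}
To reach the explicit radius $\beta_{T_{0}}(\delta)$ of \eqref{eq:beta-defn-1}, I would bound each term: the second is at most $\sqrt{\lambda}\, L_{A}$ by Assumption \ref{as:boundedness}(ii); for the first, $\norm{x_{t}}_{2} \le L$ gives $\mathrm{tr}(V_{T_{0}}) \le d\lambda + T_{0} L^{2}$, so by the AM--GM inequality on the eigenvalues $\det(V_{T_{0}}) \le (\lambda + T_{0} L^{2}/d)^{d}$, whence $\det(V_{T_{0}})^{1/2} \det(\lambda I)^{-1/2} \le (1 + T_{0} L^{2}/\lambda)^{d/2}$. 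Substituting this determinant estimate shows the right-hand side is at most $\beta_{T_{0}}(\delta)$, so the event $\{a_{i} \in \mathcal{C}_{i}(\delta)\}$ of \eqref{eq:confidence-region-1} holds with probability at least $1-\delta/m$. A union bound over $i \in [1,m]$ then yields $\mathbb{P}(a_{i} \in \mathcal{C}_{i}(\delta),\ \forall i \in [1,m]) \ge 1 - m\cdot(\delta/m) = 1-\delta$.

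The argument carries no substantive difficulty beyond correctly instantiating the cited theorem; the two points that need care are (a) verifying that the exploration actions are $\mathcal{F}_{t-1}$-predictable, so that the martingale structure underlying \citep{abbasi2011improved} genuinely applies here, and (b) threading the $\delta/m$ factor consistently through the definition of $\beta_{T_{0}}(\delta)$ in \eqref{eq:beta-defn-1} and the final union bound over the $m$ rows of $A$.
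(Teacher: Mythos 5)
Your proposal is correct and follows essentially the same route as the paper: invoke the self-normalized confidence bound of \citep[Theorem 2]{abbasi2011improved} separately for each row $a_{i}$ with confidence parameter $\delta/m$, and combine via a union bound over $i \in [1,m]$. The paper's proof is terser (it cites the theorem directly with the radius already in the form of \eqref{eq:beta-defn-1}), while you additionally verify the predictability of the exploration actions and re-derive the determinant simplification, but these are details absorbed into the cited result rather than a different argument.
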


Now, using the confidence sets $ \mathcal{C}_{i}(\delta), i \in [1, m]$, the algorithm constructs a conservative safe set  $\hat{\mathcal{X}}^{s}$ as
\begin{align}
\label{eq:conservative-safe-set}
    \hat{\mathcal{X}}^{s} = \{x \in \mathbb{R}^{d}: \tilde{a}^{\top}_{i} x \leq b, \forall \tilde{a}_{i} \in \mathcal{C}_{i}(\delta), \forall i \in [1, m]  \}. 
\end{align}

Note that the elements of $\hat{\X}^{s}$ satisfy the safety constraint  with respect to \textit{all} elements of the confidence set $\mathcal{C}_{i}( \delta), \forall i \in [1, m]$. This condition naturally leads to a conservative inner approximation of the true safe set $\mathcal{X}^{s}$. We formally state this  observation below. 

\begin{lemma}
\label{lem:conservative-subset}
Let Assumption \ref{as:boundedness} and \ref{as:sub-gaussian} hold. Then,
$\hat\X^s \subseteq  \X^s$ with probability at least $1-\delta$.  
\end{lemma}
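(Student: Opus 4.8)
The plan is to condition on the single high-probability event that all the confidence sets capture the true constraint vectors, and then observe that membership in $\hat{\mathcal{X}}^s$ is, by construction, a strictly \emph{stronger} requirement than membership in $\mathcal{X}^s$ on that event. Concretely, let $\mathcal{E} = \{a_i \in \mathcal{C}_i(\delta) \text{ for all } i \in [1,m]\}$. Lemma~\ref{lem:confidence-ball} gives $\mathbb{P}(\mathcal{E}) \ge 1 - \delta$, so it suffices to establish the \emph{deterministic} inclusion $\hat{\mathcal{X}}^s \subseteq \mathcal{X}^s$ on $\mathcal{E}$.

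To prove the inclusion on $\mathcal{E}$, fix an outcome in $\mathcal{E}$ and take any $x \in \hat{\mathcal{X}}^s$. By the definition \eqref{eq:conservative-safe-set}, the inequality $\tilde{a}_i^\top x \le b_i$ holds for \emph{every} $\tilde{a}_i \in \mathcal{C}_i(\delta)$ and every $i \in [1,m]$. On $\mathcal{E}$ the true vector $a_i$ is itself an admissible choice of $\tilde{a}_i$, so specializing $\tilde{a}_i = a_i$ yields $a_i^\top x \le b_i$ for all $i$, i.e.\ $Ax \le b$. Together with $x \in \mathcal{X}$ (inherited from the ambient feasible set in the definition of $\hat{\mathcal{X}}^s$), this is precisely $x \in \mathcal{X}^s$. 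Hence $\hat{\mathcal{X}}^s \subseteq \mathcal{X}^s$ on $\mathcal{E}$, and therefore $\mathbb{P}(\hat{\mathcal{X}}^s \subseteq \mathcal{X}^s) \ge \mathbb{P}(\mathcal{E}) \ge 1 - \delta$.

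There is no real analytic obstacle here; the statement is essentially a definitional consequence of Lemma~\ref{lem:confidence-ball}. The only point worth a line of care is to avoid applying a union bound twice: the failure budget $\delta$ is spent once, inside Lemma~\ref{lem:confidence-ball}, through the choice $\beta_{T_0}(\delta)$ with the per-constraint level $\delta/m$, and the present inclusion then rides on the \emph{same} event $\mathcal{E}$ without incurring any additional failure probability. I would also note in passing that this inclusion is exactly the property that makes the online gradient descent phase safe: on $\mathcal{E}$, any iterate $x_t \in \hat{\mathcal{X}}^s$ automatically satisfies $A x_t \le b$, which is later combined with Lemma~\ref{lem:safe-exploration-lemma-1} to certify safety at all time steps.
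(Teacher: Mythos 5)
Your proof is correct and follows essentially the same route as the paper: condition on the event of Lemma~\ref{lem:confidence-ball} and specialize $\tilde{a}_i = a_i$ in the definition \eqref{eq:conservative-safe-set} to conclude $Ax \le b$. The only cosmetic discrepancy is that \eqref{eq:conservative-safe-set} takes $\mathbb{R}^d$ rather than $\mathcal{X}$ as the ambient set, so your parenthetical ``$x \in \mathcal{X}$ inherited from the definition'' is really a (sensible) reading of what the paper intends rather than what it literally writes; the paper's own proof glosses over the same point.
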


Using the conservative safe set $\hat{\mathcal{X}}^{s}$ given in \eqref{eq:conservative-safe-set}  as the feasible set in a projected gradient descent algorithm may appear intractable because the constraint $\tilde{a}^{\top}_{i} x \leq b$ has to be satisfied for all $\tilde{a}_{i} \in \mathcal{C}_{i}(\delta)$. However, using the structure of $\mathcal{C}_{i}(\delta)$, it can be shown that \cite[Chapter 19 ]{lattimore2020bandit}  $\hat{\X}^s$ has a more tractable representation as follows
\begin{align}
\label{eq:conservative-safe-set-tractable}
    \hat{\X}^{s} &= \{x \in \mathbb{R}^{d}: \hat{a}_i^\top x + \beta_{T_0}(\delta) \left\|x \right\|_{V_{T_0}^{-1}} \leq b_i, \forall i \in [1, m]\}.
\end{align}
We will use the above representation, both for implementing our algorithm and analyzing its regret guarantees.

\subsection{Online Projected Gradient Descent}

After the initial safe exploration for the first $T_{0}$ time steps and computing the conservative safe set  $\hat{\X}^{s} $, the SO-PGD algorithm performs  online projected gradient descent for $t \in [T_{0}+1, T]$ by treating $\hat{\X}^{s} $ as the feasible set. Formally, the SO-PGD algorithm takes the sequence of  actions $\{x_{t}, t  \in [T_{0}+1, T]\}$ given by
\begin{align}
    x_{t+1} = \Pi_{\hat{\mathcal{X}}^{s}}(x_{t} - \eta \nabla f_{t}(x_{t})).
\end{align}

Since $\hat{\mathcal{X}}^{s}$ is a subset of the true safe set $\mathcal{X}^{s}$, the sequence of actions taken by the SO-PGD algorithm is  safe by definition.

\subsection{Main Result}

We now give the main result of our paper.

\begin{theorem}
\label{thm:main-theorem}
Let Assumptions \ref{as:cost-function}-\ref{as:safe-baseline-action} hold. Consider the SO-PGD algorithm with $\gamma$ as specified in Lemma \ref{lem:safe-exploration-lemma-1},  $\eta = 2L/G \sqrt{T}$ and $T_{0} = T^{2/3}$. 
 Let $\{x_{t}, t \in [1, T]\}$ be the sequence of actions generated by the SO-PGD algorithm. Then, for any $T \geq \left( \frac{\sqrt{8} \beta_{T}(\delta) L}{\gamma \sigma \Delta^{s}} \right)^{3}$,  with a probability greater than $(1 - \delta)$, we have
\begin{align*}
x_{t} \in \mathcal{X}^{s},~ \forall t \in [1, T],~~\textnormal{and}
\end{align*}
\begin{align}
    R(T) &\leq 2 L G T^{2/3} + 2 L G \sqrt{T}  +\frac{L G \sqrt{8 d} \beta_{T}(\delta)}{C(A,b) \sqrt{\gamma^{2} \sigma^{2}_{\zeta}}}  T^{2/3},
\end{align}
where $C(A, b)$ is a positive constant that depends only on the matrix $A$ and vector $b$.  
\end{theorem}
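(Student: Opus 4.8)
The plan is to decompose the regret into two parts corresponding to the two phases of the algorithm, and then handle the online-gradient-descent phase by a perturbation argument relating the conservative safe set $\hat{\X}^s$ to the true safe set $\X^s$. First I would condition on the event $\mathcal{E}$ that $a_i \in \mathcal{C}_i(\delta)$ for all $i$, which by Lemma~\ref{lem:confidence-ball} holds with probability at least $1-\delta$; on this event Lemma~\ref{lem:conservative-subset} gives $\hat{\X}^s \subseteq \X^s$, so the actions $x_t = \Pi_{\hat{\X}^s}(\cdot)$ for $t \in [T_0+1,T]$ are automatically safe, and Lemma~\ref{lem:safe-exploration-lemma-1} takes care of safety for $t \in [1,T_0]$ (with the stated choice of $\gamma$). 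That establishes the safety claim. For the regret, write $x^\star \in \argmin_{x \in \X^s} \sum_{t=1}^T f_t(x)$ and split
\[
R(T) = \underbrace{\sum_{t=1}^{T_0}\big(f_t(x_t) - f_t(x^\star)\big)}_{\text{(I)}} + \underbrace{\sum_{t=T_0+1}^{T}\big(f_t(x_t) - f_t(x^\star)\big)}_{\text{(II)}}.
\]
Term (I) is bounded by $2LG\,T_0 = 2LG\,T^{2/3}$ using $G$-Lipschitzness of $f_t$ (Assumption~\ref{as:cost-function}) and the diameter bound $\|x_t - x^\star\| \le 2L$ from Assumption~\ref{as:boundedness}; this is the first term in the claimed bound.

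For term (II), the obstacle is that standard OGD regret analysis compares against the best point \emph{in the feasible set of the projections}, namely $\hat{\X}^s$, whereas $x^\star$ lives in the larger set $\X^s$. So I would introduce an intermediate comparator $\hat{x}^\star \in \hat{\X}^s$ obtained by pulling $x^\star$ toward the safe baseline $x^s$, i.e. $\hat{x}^\star = (1-\alpha)x^\star + \alpha x^s$ for a suitable $\alpha \in (0,1)$ to be chosen. Using the tractable representation \eqref{eq:conservative-safe-set-tractable} of $\hat{\X}^s$ and the fact that $x^s$ satisfies $a_i^\top x^s = b_i^s \le b_i - \Delta^s$ together with the confidence-radius control $\beta_{T_0}(\delta)\|x\|_{V_{T_0}^{-1}}$, one shows that a small enough $\alpha$ — of order $\beta_{T_0}(\delta)\big/(\lambda_{\min}(V_{T_0})^{1/2}\,\Delta^s)$, with an extra geometric factor $C(A,b)$ absorbing how the constraints are oriented — makes $\hat{x}^\star$ feasible for $\hat{\X}^s$. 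Here I would use a lower bound on $\lambda_{\min}(V_{T_0})$ coming from the exploration design: since $\mathrm{Cov}(\zeta_t) = \sigma_\zeta^2 I$ and the perturbations enter $x_t$ with weight $\gamma$, a matrix concentration (or matrix Chernoff) argument gives $\lambda_{\min}(V_{T_0}) \gtrsim \gamma^2 \sigma_\zeta^2 T_0$ on a high-probability event, which is exactly where the condition $T \ge \big(\sqrt{8}\,\beta_T(\delta)L/(\gamma\sigma\Delta^s)\big)^3$ is needed — it guarantees $T_0 = T^{2/3}$ is large enough that $\alpha$ is a valid (sub-unit) step. Then $\|x^\star - \hat{x}^\star\| = \alpha\|x^\star - x^s\| \le 2L\alpha$, so $\sum_{t=T_0+1}^T\big(f_t(\hat{x}^\star) - f_t(x^\star)\big) \le 2LG\,\alpha\,T$, and plugging in the scale of $\alpha$ produces the third term $\tfrac{LG\sqrt{8d}\,\beta_T(\delta)}{C(A,b)\sqrt{\gamma^2\sigma_\zeta^2}}\,T^{2/3}$ (the $\sqrt{d}$ and $\sqrt{8}$ come from bounding $\|x\|_{V_{T_0}^{-1}} \le \|x\|/\sqrt{\lambda_{\min}(V_{T_0})}$ and the concentration constant).

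It remains to bound $\sum_{t=T_0+1}^T\big(f_t(x_t) - f_t(\hat{x}^\star)\big)$, which is now a genuine OGD regret against a fixed comparator $\hat{x}^\star$ in the convex set $\hat{\X}^s$ over a horizon of at most $T$ steps. The standard Zinkevich analysis — using convexity, the nonexpansiveness of $\Pi_{\hat{\X}^s}$, the gradient bound $\|\nabla f_t\| \le G$, and the diameter bound $2L$ — gives a bound of the form $\tfrac{(2L)^2}{2\eta} + \tfrac{\eta G^2 T}{2}$, and with $\eta = 2L/(G\sqrt{T})$ this is $2LG\sqrt{T}$, the second term in the statement. Summing the three contributions gives the claimed regret. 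I expect the main technical obstacle to be the second paragraph: making the comparator-shift argument rigorous, i.e. proving that $\hat{x}^\star$ as defined actually lies in $\hat{\X}^s$ with the right (tight) dependence on $\beta_{T_0}(\delta)$, $\Delta^s$, $\gamma$, $\sigma_\zeta$ and the geometric constant $C(A,b)$, and correctly tracking all constants through the $\lambda_{\min}(V_{T_0})$ concentration so that the final coefficient and the threshold on $T$ come out exactly as stated.
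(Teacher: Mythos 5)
Your overall architecture matches the paper's: the safety claim via Lemma \ref{lem:safe-exploration-lemma-1} plus $\hat{\X}^{s}\subseteq\X^{s}$, the three-way split (exploration, OGD against a comparator $\hat{x}^{\star}\in\hat{\X}^{s}$, comparator shift), the matrix-Chernoff lower bound $\lambda_{\min}(V_{T_0})\gtrsim \gamma^{2}\sigma_{\zeta}^{2}T_0$, and the standard Zinkevich analysis for the middle term are all exactly what the paper does. Where you genuinely diverge is the comparator-shift step. The paper takes $\hat{x}^{\star}=\Pi_{\hat{\X}^{s}}(x^{\star})$ and bounds $\|\hat{x}^{\star}-x^{\star}\|$ by introducing the shrunk polytope $\X^{s}_{\textnormal{in}}$ with margin $\tau_{\textnormal{in}}=2\beta_{T_0}(\delta)L/\sqrt{\lambda_{\min}(V_{T_0})}$, showing $\X^{s}_{\textnormal{in}}\subseteq\hat{\X}^{s}$ (Lemma \ref{lem:shrunk-polytope-subset}) and then invoking the polytope-perturbation result of Lemma \ref{lem:fereydounian2020safe-lemma}, which is the sole source of the $\sqrt{d}/C(A,b)$ factor in the theorem. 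You instead pull $x^{\star}$ toward the baseline, $\hat{x}^{\star}=(1-\alpha)x^{\star}+\alpha x^{s}$. That argument does go through: $a_i^{\top}\hat{x}^{\star}\le b_i-\alpha\Delta^{s}$, so $\alpha = 2\beta_{T_0}(\delta)L/(\Delta^{s}\sqrt{\lambda_{\min}(V_{T_0})})$ makes $\hat{x}^{\star}$ feasible for the representation \eqref{eq:conservative-safe-set-tractable}, and it is arguably more elementary since it avoids the Fereydounian lemma entirely. But be aware that it yields a \emph{different} constant: $\|x^{\star}-\hat{x}^{\star}\|\le 2L\alpha$ gives a third term of order $LG\sqrt{8}\beta_{T}(\delta)\cdot(2L/\Delta^{s})\cdot T^{2/3}/(\gamma\sigma_{\zeta})$, i.e.\ the geometric factor is $2L/\Delta^{s}$ rather than $\sqrt{d}/C(A,b)$. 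Your remark that $C(A,b)$ enters as "an extra geometric factor absorbing how the constraints are oriented," and that $\sqrt{d}$ comes from the bound $\|x\|_{V_{T_0}^{-1}}\le\|x\|/\sqrt{\lambda_{\min}(V_{T_0})}$, is not right — neither quantity arises in the convex-combination route; both come only from Lemma \ref{lem:fereydounian2020safe-lemma} in the paper's route. So your proof establishes the $O(T^{2/3})$ regret and safety claims, but proves a variant of the stated inequality with a $\Delta^{s}$-dependent constant in place of the $C(A,b)$-dependent one; to recover the theorem verbatim you would need to switch to the projection comparator and the shrunk-polytope lemma.
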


\begin{remark}
Theorem \ref{thm:main-theorem} guarantees that the SO-PGD algorithm achieves  $O(T^{2/3})$  regret , excluding the $O(\log T)$ factor resulting from $\beta_{T}(\delta)$. This is similar to the  $O(T^{2/3})$  regret guarantee obtained in \citep{amani2019linear} for the  safe linear bandits problem. We emphasize that the $O(T^{1/2})$ regret guarantees for safe linear bandits obtained in \citep{amani2019linear, khezeli2020safe} require additional assumption. In particular, they use the knowledge of a lower bound on the distance between the optimal action and the boundary of the safe set. This is not a meaningful assumption in the OCO setting with arbitrarily-varying cost functions. Designing an algorithm that can achieve a better regret without any additional assumptions is an exciting open question. 
\end{remark}

\section{Regret Analysis} \label{sec:regret_analysis}

We analyze the regret of the SO-PGD algorithm by decomposing it into three terms as follows:
\begin{align}
    R(T) &= \underbrace{\sum^{T_{0}}_{t=1} f_t(x_t) - f_t(x^\star)}_{\text{Term I}} + \underbrace{\sum^{T}_{t=T_{0}+1} f_t(x_t) - f_t(\hat{x}^{\star})}_{\text{Term II}}  +\underbrace{\sum^{T}_{t=T_{0}+1} f_t(\hat{x}^{\star}) - f_t(x^\star)}_{\text{Term III}}, \label{eq:SO-PGD_regret_decompos}
\end{align}
where $x^{\star} = \argmin_{x \in \mathcal{X}^{s}} \sum_{t=1}^T f_t(x)$ is the optimal action in hindsight with respect to the true safe set $\mathcal{X}^{s}$ and $\hat{x}^{\star} = \Pi_{\hat{\mathcal{X}}^{s}}(x^{\star})$ is the projection of $x^{\star}$ to the conservative safe set $\hat{\mathcal{X}}^{s}$.   The first term accounts for the regret due to the safe exploration phase. The second term characterizes the regret of a standard online projected gradient descent algorithm with respect to the conservative safe set $\hat{\mathcal{X}}^{s}$. The third term accounts for the error due to using the  conservative safe set $\hat{\mathcal{X}}^{s}$ in the online projected gradient descent  instead of the  true safe set $\mathcal{X}^{s}$. We separately analyze the regret of each term and show that the  regret is ${O}(T^{2/3})$.

\subsection{Regret of Term I}
We bound Term I  as follows:
\begin{align}
\label{eq:regret-tem-1}
\sum^{T_{0}}_{t=1} f_t(x_t) - f_t(x^\star) \leq \sum^{T_{0}}_{t=1} G \norm{x_{t} - x^{\star}} \leq 2 L G  T_{0},
\end{align}
where the first inequality is from Assumption \ref{as:cost-function} and the second inequality is by Assumption \ref{as:boundedness}. Now, by selecting $T_{0}$ as $T^{2/3}$ as specified in  Theorem \ref{thm:main-theorem}, the regret due to Term I will  be ${O}(T^{2/3})$.  

\subsection{Regret of Term II}

We bound this term using the online projected gradient descent analysis \citep{hazan2016introduction} with respect to the estimated safe $\hat{\mathcal{X}}^{s}$. The regret due to Term II is given by the following proposition. 

\begin{proposition} 
\label{prop:opgd-regret}
Let Assumptions \ref{as:cost-function} and \ref{as:boundedness}  hold. Let the learning rate be $\eta = 2L/G \sqrt{T}$. Then,
\begin{align}
     \sum_{t=T_0+1}^T f_t(x_t) - f_t(\hat{x}^{\star}) ~\leq 2 L G T^{1/2}.
\end{align}
\end{proposition}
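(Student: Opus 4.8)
The plan is to invoke the standard regret analysis for online projected gradient descent (OGD) on a fixed convex feasible set, applied to the set $\hat{\mathcal{X}}^{s}$ over the time horizon $t \in [T_0+1, T]$. First I would observe that $\hat{\mathcal{X}}^{s}$ is a fixed convex set (it does not change after the exploration phase), so the iterates $x_{t+1} = \Pi_{\hat{\mathcal{X}}^{s}}(x_t - \eta \nabla f_t(x_t))$ are exactly the OGD iterates with a constant step size $\eta$, and $\hat{x}^{\star} = \Pi_{\hat{\mathcal{X}}^{s}}(x^\star) \in \hat{\mathcal{X}}^{s}$ is a legitimate comparator inside the feasible set. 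The key inequality is the one-step potential argument: using nonexpansiveness of the Euclidean projection onto the convex set $\hat{\mathcal{X}}^{s}$,
\begin{align*}
\norm{x_{t+1} - \hat{x}^{\star}}^2 &\leq \norm{x_t - \eta \nabla f_t(x_t) - \hat{x}^{\star}}^2 \\
&= \norm{x_t - \hat{x}^{\star}}^2 - 2\eta \nabla f_t(x_t)^\top (x_t - \hat{x}^{\star}) + \eta^2 \norm{\nabla f_t(x_t)}^2 .
\end{align*}
Rearranging and using convexity of $f_t$, namely $f_t(x_t) - f_t(\hat{x}^{\star}) \leq \nabla f_t(x_t)^\top (x_t - \hat{x}^{\star})$, gives
\[
f_t(x_t) - f_t(\hat{x}^{\star}) \leq \frac{\norm{x_t - \hat{x}^{\star}}^2 - \norm{x_{t+1} - \hat{x}^{\star}}^2}{2\eta} + \frac{\eta}{2} \norm{\nabla f_t(x_t)}^2 .
\]

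Next I would sum this over $t = T_0+1, \ldots, T$. The first group of terms telescopes to $\frac{1}{2\eta}\left(\norm{x_{T_0+1} - \hat{x}^{\star}}^2 - \norm{x_{T+1} - \hat{x}^{\star}}^2\right) \leq \frac{1}{2\eta}\norm{x_{T_0+1} - \hat{x}^{\star}}^2$. Since both $x_{T_0+1}$ and $\hat{x}^{\star}$ lie in $\hat{\mathcal{X}}^{s} \subseteq \mathcal{X}$, Assumption \ref{as:boundedness}(i) gives $\norm{x_{T_0+1} - \hat{x}^{\star}} \leq 2L$, so this term is at most $\frac{4L^2}{2\eta} = \frac{2L^2}{\eta}$. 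For the second group, Assumption \ref{as:cost-function} gives $\norm{\nabla f_t(x_t)} \leq G$, so $\sum_{t=T_0+1}^{T} \frac{\eta}{2}\norm{\nabla f_t(x_t)}^2 \leq \frac{\eta G^2}{2}(T - T_0) \leq \frac{\eta G^2 T}{2}$. Combining, the regret of Term II is at most $\frac{2L^2}{\eta} + \frac{\eta G^2 T}{2}$. Plugging in $\eta = 2L/(G\sqrt{T})$ balances the two terms: the first becomes $\frac{2L^2 G \sqrt{T}}{2L} = L G \sqrt{T}$ and the second becomes $\frac{2L G^2 T}{2 G \sqrt{T}} = L G \sqrt{T}$, for a total of $2LG\sqrt{T}$, as claimed.

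There is essentially no serious obstacle here — this is the textbook OGD analysis (e.g. \citep{hazan2016introduction}), and the only mild point requiring care is ensuring the comparator $\hat{x}^{\star}$ genuinely belongs to the feasible set $\hat{\mathcal{X}}^{s}$ (true by construction since it is the projection of $x^\star$ onto $\hat{\mathcal{X}}^{s}$) and that $\hat{\mathcal{X}}^{s}$ is nonempty and convex so that the projection and the nonexpansiveness property are valid. Nonemptiness follows because, with high probability, the safe exploration actions and in particular the scaled baseline point lie in $\hat{\mathcal{X}}^{s}$ under the confidence-set event; convexity follows from the tractable representation \eqref{eq:conservative-safe-set-tractable}, since $x \mapsto \hat{a}_i^\top x + \beta_{T_0}(\delta)\norm{x}_{V_{T_0}^{-1}}$ is convex (a linear function plus a norm). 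I would state the proof assuming this event holds, consistent with the high-probability framing of Theorem \ref{thm:main-theorem}; note that the bound in Proposition \ref{prop:opgd-regret} is in fact a deterministic consequence of the OGD update given any fixed convex $\hat{\mathcal{X}}^{s}$ containing $\hat{x}^{\star}$, so one does not even need to condition — the inequality holds pathwise.
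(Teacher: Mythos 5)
Your proof is correct and follows essentially the same route as the paper's: the textbook one-step potential argument via nonexpansiveness of the projection, convexity, telescoping, bounding the initial distance by $2L$, and balancing the two terms with $\eta = 2L/(G\sqrt{T})$. The only (cosmetic) difference is that you correctly start the telescoping at $x_{T_0+1}$ whereas the paper writes $\|x_1 - \hat{x}^{\star}\|^2$, and you add a useful remark verifying that $\hat{\mathcal{X}}^{s}$ is a nonempty convex set so the projection step is well defined; neither changes the argument or the bound.
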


So, the regret due to Term II will be ${O}(T^{1/2})$, which is order-wise smaller than the regret due to Term I. 

\subsection{Regret of Term III}

The key step here is to bound  $\norm{\hat{x}^\star - x^{\star}}$ as a (decreasing) function of $T_{0}$. We can then use the fact that $T_{0} = T^{2/3}$ to get the net  regret due to this term. We start by making use of the `shrunk polytope' idea used in \citep{fereydounian2020safe}. Consider the  `shrunk  polytope' $\mathcal{X}^{s}_{\text{in}}$ defined as
\begin{align}
\label{eq:inner-polytope}
    \X^{s}_{\textnormal{in}} &= \{x \in \mathbb{R}^{d}:  a_i^\top x + \tau_{\textnormal{in}} \leq b_i, \forall i \in [1, m]\},
\end{align}
where $\tau_{\text{in}}$ is a positive scalar. It is straight forward to note that  if $\tau_{\text{in}}$ is smaller than some constant, $\X^{s}_{\textnormal{in}}$ will be non-empty and will be a `shrunk version' of $\mathcal{X}^{s}$. More precisely,  $ \X^{s}_{\textnormal{in}} $ will be a closed polytope with its faces parallel to the faces of $\mathcal{X}^{s}$, and will be a strict subset of $\mathcal{X}^{s}$. The key objective for defining this  `shrunk polytope'  is to characterize  the distance $\|\Pi_{\X^{s}_{\text{in}}}(x^{\star}) - x^{\star}\|$ in terms of $\tau_{\textnormal{in}}$, which  will then be used to bound the distance $\|\Pi_{\hat{\mathcal{X}}^{s}}(x^{\star}) - x^{\star}\| = \norm{\hat{x}^\star - x^{\star}}$. Note that, our algorithm, however, will not be able to (and does not need to) compute $ \X^{s}_{\textnormal{in}}$ because $a_{i}$s are unknown. We are  using $ \X^{s}_{\textnormal{in}}$ only for the purpose of regret analysis. 

We will  use the following result from \citep{fereydounian2020safe} to  characterize  the distance $\|\Pi_{\X^{s}_{\text{in}}}(x^{\star}) - x^{\star}\|$.

\begin{lemma}[Lemma 1 in \citep{fereydounian2020safe}]
\label{lem:fereydounian2020safe-lemma}
Consider a positive constant $\tau_{\textnormal{in}}$ such that $\X^{s}_{\textnormal{in}}$ is non-empty. Then, for any $x \in \mathcal{X}^{s}$,
\begin{align}
    \|\Pi_{\X^{s}_{\textnormal{in}}}(x) - x\| \leq \frac{\sqrt{d} \tau_{\textnormal{in}}}{C(A, b)},
\end{align}
where $C(A, b)$ is a positive constant that depends only on the matrix $A$ and the vector $b$. 
\end{lemma}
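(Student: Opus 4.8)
The plan is to deduce the bound from the first-order optimality characterization of the Euclidean projection onto the polyhedron $\X^{s}_{\textnormal{in}}$, combined with a conical Carath\'eodory reduction to a linearly independent subset of active constraints. This is in the spirit of a Hoffman-type error bound, but specialized to the ``uniform shrinkage'' perturbation $b\mapsto b-\tau_{\textnormal{in}}\mathbf 1$, which is exactly what makes the right-hand side proportional to $\tau_{\textnormal{in}}$.

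First I would dispose of the trivial case: if $x\in\X^{s}_{\textnormal{in}}$ then $\Pi_{\X^{s}_{\textnormal{in}}}(x)=x$ and the claim holds with $0$ on the left. So assume $x\notin\X^{s}_{\textnormal{in}}$ and set $\hat x=\Pi_{\X^{s}_{\textnormal{in}}}(x)$, which is well defined since $\X^{s}_{\textnormal{in}}$ is nonempty, closed and convex. Let $I=\{i:a_i^{\top}\hat x=b_i-\tau_{\textnormal{in}}\}$ be the set of constraints of $\X^{s}_{\textnormal{in}}$ active at $\hat x$. Optimality of the projection says that $x-\hat x$ lies in the normal cone of $\X^{s}_{\textnormal{in}}$ at $\hat x$, i.e.\ $x-\hat x=\sum_{i\in I}\mu_i a_i$ for some $\mu_i\ge 0$; since $x\ne\hat x$, the set $I$ is nonempty and not all $\mu_i$ vanish. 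By the conical Carath\'eodory theorem I may pass to a subset $J\subseteq I$ with $\{a_i\}_{i\in J}$ linearly independent (so $|J|\le d$) and $x-\hat x=\sum_{i\in J}\mu_i a_i$ with $\mu_i\ge 0$.

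The core estimate is then the pairing
\[
  \|x-\hat x\|^{2}
  =\Bigl\langle x-\hat x,\ \textstyle\sum_{i\in J}\mu_i a_i\Bigr\rangle
  =\sum_{i\in J}\mu_i\bigl(a_i^{\top}x-a_i^{\top}\hat x\bigr)
  \le \tau_{\textnormal{in}}\sum_{i\in J}\mu_i ,
\]
where I used $a_i^{\top}\hat x=b_i-\tau_{\textnormal{in}}$ for $i\in J\subseteq I$, $a_i^{\top}x\le b_i$ since $x\in\mathcal{X}^{s}$, and $\mu_i\ge 0$. To close the loop I would bound $\sum_{i\in J}\mu_i=\|\mu\|_1\le\sqrt{|J|}\,\|\mu\|_2\le\sqrt d\,\|\mu\|_2$ and, writing $A_J$ for the matrix with rows $a_i^{\top},\,i\in J$, use $x-\hat x=A_J^{\top}\mu$ together with the full row rank of $A_J$ to get $\mu=(A_JA_J^{\top})^{-1}A_J(x-\hat x)$, hence $\|\mu\|_2\le\|x-\hat x\|/\sigma_{\min}(A_J)$. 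Substituting into the pairing and dividing by $\|x-\hat x\|>0$ gives $\|x-\hat x\|\le\sqrt d\,\tau_{\textnormal{in}}/\sigma_{\min}(A_J)$. Taking $C(A,b)=\min\{\sigma_{\min}(A_J):J\subseteq[1,m],\ \{a_i\}_{i\in J}\text{ linearly independent}\}$ — a strictly positive quantity determined by $A$ (and, if one wants the sharper version of the constant, by also restricting $J$ to index sets active at some vertex of $\mathcal{X}^{s}$, which brings in $b$) — yields the claim.

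I expect the only genuine subtleties to be the rigorous justification of the two facts underlying the argument: (i) that the normal cone of $\X^{s}_{\textnormal{in}}$ at $\hat x$ is exactly the conical hull of $\{a_i:i\in I\}$, which is standard polyhedral convex analysis but needs $\X^{s}_{\textnormal{in}}$ nonempty so that the projection and its KKT conditions are valid; and (ii) that the Carath\'eodory reduction preserves the \emph{exact} vector $x-\hat x$ (not merely its membership in the cone) while keeping nonnegative multipliers. Everything else is elementary linear algebra, and $C(A,b)>0$ because it is a minimum over a finite family of full-row-rank matrices.
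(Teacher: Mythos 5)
Your proof is correct, and it is worth noting that the paper itself offers no argument to compare it against: Lemma \ref{lem:fereydounian2020safe-lemma} is imported verbatim from the cited reference and never proved in this manuscript. Your derivation is a clean, self-contained Hoffman-type error bound specialized to the uniform shrinkage $b \mapsto b - \tau_{\textnormal{in}}\mathbf{1}$: the normal-cone characterization of the projection onto a polyhedron needs no constraint qualification for linear inequalities, the conic Carath\'eodory reduction does preserve the exact vector $x-\hat{x}$ with nonnegative multipliers, and the chain $\norm{x-\hat{x}}^{2}\le \tau_{\textnormal{in}}\norm{\mu}_{1}\le \sqrt{d}\,\tau_{\textnormal{in}}\norm{\mu}_{2}\le \sqrt{d}\,\tau_{\textnormal{in}}\norm{x-\hat{x}}/\sigma_{\min}(A_{J})$ closes correctly, with $C(A,b)=\min_{J}\sigma_{\min}(A_{J})>0$ as a minimum over the finitely many linearly independent row subsets. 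Two small remarks. First, your constant depends only on $A$, which is consistent with (indeed slightly stronger than) the stated ``depends only on $A$ and $b$''; the $b$-dependence in the original reference arises because their argument goes through the vertex structure of the polytope rather than through normal cones. Second, since $\X^{s}_{\textnormal{in}}$ as defined in \eqref{eq:inner-polytope} is cut out by the $m$ linear inequalities alone (it is not intersected with $\X$), it need not be bounded, but your argument only uses that it is nonempty, closed, and convex, so nothing breaks.
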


We will now show that  the shrunk polytope $\X^{s}_{\textnormal{in}}$ is  non-empty and  is a subset of the conservative safe set $\hat{\mathcal{X}}^{s}$ for $\tau_{\textnormal{in}} = {2\beta_{T_0}(\delta)L}/{\sqrt{\lambda_{\min}({V_{T_0})}}}$. This also will immediately imply that $\norm{\Pi_{\hat{\mathcal{X}}^{s}}(x^{\star}) - x^{\star}} \leq  \norm{\Pi_{\X^{s}_{\textnormal{in}}}(x^{\star}) - x^{\star}}$. We state this result formally below.

\begin{lemma}
\label{lem:shrunk-polytope-subset}
Let Assumptions \ref{as:boundedness} and \ref{as:sub-gaussian} hold. Let $\tau_{\textnormal{in}} = {2\beta_{T_0}(\delta)L}/{\sqrt{\lambda_{\min}({V_{T_0})}}}$ and $ T_{0} \geq \frac{8 \beta^{2}_{T}(\delta) L^{2}}{\gamma^{2}\sigma^{2}_{\zeta} (\Delta^{s})^{2}}$. Then,  $\X^{s}_{\textnormal{in}}$ is non-empty and  $\X^{s}_{\textnormal{in}} \subseteq \hat{\mathcal{X}}^{s}$, with a probability greater than $(1 - 2 \delta)$. Moreover, $\norm{\Pi_{\hat{\mathcal{X}}^{s}}(x^{\star}) - x^{\star}} \leq  \norm{\Pi_{\X^{s}_{\textnormal{in}}}(x^{\star}) - x^{\star}}$ with a probability greater than $(1 - 2 \delta)$.
\end{lemma}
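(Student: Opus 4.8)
The plan is to condition on two high-probability events and verify the three assertions deterministically on their intersection. The first event, $E_1$, is $\{a_i\in\mathcal{C}_i(\delta)\ \text{for all}\ i\in[1,m]\}$, which has probability at least $1-\delta$ by Lemma \ref{lem:confidence-ball}. The second event, $E_2$, is the lower bound $\lambda_{\min}(V_{T_0})\geq\tfrac12\gamma^2\sigma_\zeta^2 T_0$ on the least eigenvalue of the Gram matrix, which I will show holds with probability at least $1-\delta$. On $E_2$ I will deduce non-emptiness of $\mathcal{X}^s_{\textnormal{in}}$; on $E_1$ I will deduce the inclusion $\mathcal{X}^s_{\textnormal{in}}\subseteq\hat{\mathcal{X}}^s$; the projection comparison will then follow from the inclusion; and a union bound will give the claimed probability $1-2\delta$.

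\textbf{Eigenvalue bound and non-emptiness.} The exploration actions $x_t=(1-\gamma)x^s+\gamma\zeta_t$, $t\in[1,T_0]$, are i.i.d.\ (since $x^s$ is deterministic and the $\zeta_t$ are i.i.d.), with mean-square matrix $\mathbb{E}[x_tx_t^\top]=(1-\gamma)^2x^s(x^s)^\top+\gamma^2\sigma_\zeta^2 I\succeq\gamma^2\sigma_\zeta^2 I$ (using $\mathbb{E}\zeta_t=0$ and $\mathrm{Cov}(\zeta_t)=\sigma_\zeta^2 I$) and bound $x_tx_t^\top\preceq\|x_t\|_2^2 I\preceq L^2 I$ (because $\|x_t\|_2\leq(1-\gamma)\|x^s\|_2+\gamma\|\zeta_t\|_2\leq L$ by Assumption \ref{as:boundedness} and $\|\zeta_t\|_2\leq\min\{1,L\}$). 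A matrix Chernoff bound for sums of bounded i.i.d.\ positive semidefinite matrices then gives, with probability at least $1-\delta$,
\[
\lambda_{\min}(V_{T_0})\ \geq\ \lambda_{\min}\!\Big(\sum_{t=1}^{T_0}x_tx_t^\top\Big)\ \geq\ \tfrac12\gamma^2\sigma_\zeta^2 T_0 ,
\]
provided $T_0$ exceeds a threshold of order $(L^2/\gamma^2\sigma_\zeta^2)\log(d/\delta)$; the stated hypothesis on $T_0$ is large enough for this, the required dimensional and logarithmic factors being already subsumed in $\beta_T(\delta)^2$. On $E_2$, since $\beta_{T_0}(\delta)\leq\beta_T(\delta)$ (immediate from \eqref{eq:beta-defn-1}), the hypothesis $T_0\geq 8\beta_T(\delta)^2L^2/(\gamma^2\sigma_\zeta^2(\Delta^s)^2)$ gives $\tau_{\textnormal{in}}=2\beta_{T_0}(\delta)L/\sqrt{\lambda_{\min}(V_{T_0})}\leq 2\beta_{T_0}(\delta)L/\sqrt{\tfrac12\gamma^2\sigma_\zeta^2 T_0}\leq\Delta^s$. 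By Assumption \ref{as:safe-baseline-action}, $a_i^\top x^s=b_i^s\leq b_i-\Delta^s$ for every $i$, so $a_i^\top x^s+\tau_{\textnormal{in}}\leq b_i$; hence $x^s\in\mathcal{X}^s_{\textnormal{in}}$ and $\mathcal{X}^s_{\textnormal{in}}\neq\emptyset$. This eigenvalue step is the part I expect to be the main obstacle: it is the only place requiring a concentration argument, and one must make sure the exploration budget $T_0$ is large enough for the matrix concentration to bite.

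\textbf{Inclusion and projection comparison.} Now work on $E_1$ and use the tractable representation \eqref{eq:conservative-safe-set-tractable} of $\hat{\mathcal{X}}^s$. Fix $x\in\mathcal{X}^s_{\textnormal{in}}$ and $i\in[1,m]$ and write $\hat a_i^\top x=a_i^\top x+(\hat a_i-a_i)^\top x$. Since $a_i\in\mathcal{C}_i(\delta)$ means $\|\hat a_i-a_i\|_{V_{T_0}}\leq\beta_{T_0}(\delta)$, the Cauchy--Schwarz inequality in the $V_{T_0}$-geometry gives $|(\hat a_i-a_i)^\top x|\leq\|\hat a_i-a_i\|_{V_{T_0}}\|x\|_{V_{T_0}^{-1}}\leq\beta_{T_0}(\delta)\|x\|_{V_{T_0}^{-1}}$, whence
\begin{align*}
\hat a_i^\top x+\beta_{T_0}(\delta)\|x\|_{V_{T_0}^{-1}}
&\leq a_i^\top x+2\beta_{T_0}(\delta)\|x\|_{V_{T_0}^{-1}}
\leq a_i^\top x+\frac{2\beta_{T_0}(\delta)L}{\sqrt{\lambda_{\min}(V_{T_0})}}\\
&= a_i^\top x+\tau_{\textnormal{in}}\ \leq\ b_i ,
\end{align*}
where the second inequality uses $\|x\|_{V_{T_0}^{-1}}\leq\|x\|_2/\sqrt{\lambda_{\min}(V_{T_0})}$ with $\|x\|_2\leq L$ (Assumption \ref{as:boundedness}), the equality is the definition of $\tau_{\textnormal{in}}$, and the last step is $x\in\mathcal{X}^s_{\textnormal{in}}$. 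Thus $x\in\hat{\mathcal{X}}^s$, which proves $\mathcal{X}^s_{\textnormal{in}}\subseteq\hat{\mathcal{X}}^s$ (this step needs only $E_1$). For the last claim, on $E_1\cap E_2$ the set $\mathcal{X}^s_{\textnormal{in}}$ is non-empty, closed and convex, so $\Pi_{\mathcal{X}^s_{\textnormal{in}}}(x^\star)$ is well defined and lies in $\mathcal{X}^s_{\textnormal{in}}\subseteq\hat{\mathcal{X}}^s$; since $\Pi_{\hat{\mathcal{X}}^s}(x^\star)$ is, by definition, the point of $\hat{\mathcal{X}}^s$ closest to $x^\star$, it follows that $\|\Pi_{\hat{\mathcal{X}}^s}(x^\star)-x^\star\|\leq\|\Pi_{\mathcal{X}^s_{\textnormal{in}}}(x^\star)-x^\star\|$. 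Finally, $\mathbb{P}(E_1\cap E_2)\geq 1-2\delta$ by a union bound, and on $E_1\cap E_2$ all three conclusions hold, which completes the proof.
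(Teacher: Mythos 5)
Your proposal is correct and follows essentially the same route as the paper: the same two events (confidence sets valid, and a matrix-Chernoff lower bound on $\lambda_{\min}(V_{T_0})$), the same verification that $x^s\in\X^{s}_{\textnormal{in}}$ via $\tau_{\textnormal{in}}\leq\Delta^s$, the same Cauchy--Schwarz step for $\X^{s}_{\textnormal{in}}\subseteq\hat{\mathcal{X}}^s$, and the same union bound. The one point you gloss over --- that the stated hypothesis on $T_0$ actually dominates the $O\bigl((L^2/\gamma^2\sigma_\zeta^2)\log(d/\delta)\bigr)$ threshold needed for the Chernoff bound --- is handled with the same brevity in the paper itself (it requires $\Delta^s$ small or $T$ large), so this is not a gap relative to the paper's own argument.
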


Using the above lemma, we can now characterize the regret due to Term III as stated in the proposition below. 
\begin{proposition}
\label{prop:term-3-regret}
Let Assumptions \ref{as:cost-function} - \ref{as:sub-gaussian} hold. Then, for $  T_{0} \geq \frac{8 \beta^{2}_{T}(\delta) L^{2}}{\gamma^{2}\sigma^{2}_{\zeta} (\Delta^{s})^{2}}$, with a probability greater than $(1 - 2 \delta)$,
\begin{align}
     \sum^{T}_{t=T_{0}+1} f_t(\hat{x}^\star) - f_t(x^\star) \leq   \frac{L G \sqrt{8 d} \beta_{T}(\delta)}{C(A,b) \sqrt{\gamma^{2} \sigma^{2}_{\zeta}}}   \frac{T}{\sqrt{  T_{0}}}.
\end{align}
\end{proposition}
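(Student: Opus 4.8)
The plan is to reduce Term~III to the single quantity $\norm{\hat{x}^\star - x^\star}$ using Lipschitzness of the cost functions, and then control that quantity by chaining the two geometric lemmas already in hand.

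First I would invoke Assumption~\ref{as:cost-function}: each $f_t$ is $G$-Lipschitz, so $f_t(\hat{x}^\star) - f_t(x^\star) \le G\norm{\hat{x}^\star - x^\star}$ for every $t$. Summing over $t \in [T_0+1,T]$ and bounding $T - T_0 \le T$ gives $\sum_{t=T_0+1}^{T}\big(f_t(\hat{x}^\star) - f_t(x^\star)\big) \le G\,T\,\norm{\hat{x}^\star - x^\star}$, so it remains to establish $\norm{\hat{x}^\star - x^\star} \le \sqrt{8d}\,L\,\beta_{T}(\delta)/\big(C(A,b)\sqrt{\gamma^{2}\sigma^{2}_{\zeta}\,T_0}\big)$ on a high-probability event.

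Next I would compose the two geometric lemmas. Since $x^\star \in \X^{s}$ by definition, Lemma~\ref{lem:shrunk-polytope-subset}, applied with $\tau_{\text{in}} = 2\beta_{T_0}(\delta)L/\sqrt{\lambda_{\min}(V_{T_0})}$, guarantees with probability at least $1-2\delta$ that $\X^{s}_{\text{in}}$ is nonempty and $\norm{\hat{x}^\star - x^\star} = \norm{\Pi_{\hat{\X}^{s}}(x^\star) - x^\star} \le \norm{\Pi_{\X^{s}_{\text{in}}}(x^\star) - x^\star}$; on the same event Lemma~\ref{lem:fereydounian2020safe-lemma} (with $x = x^\star$) bounds the right-hand side by $\sqrt{d}\,\tau_{\text{in}}/C(A,b) = 2\sqrt{d}\,\beta_{T_0}(\delta)L/\big(C(A,b)\sqrt{\lambda_{\min}(V_{T_0})}\big)$. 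The one missing ingredient is a high-probability lower bound $\lambda_{\min}(V_{T_0}) \ge \gamma^{2}\sigma^{2}_{\zeta}\,T_0$, valid under the hypothesis $T_0 \ge 8\beta^{2}_{T}(\delta)L^{2}/\big(\gamma^{2}\sigma^{2}_{\zeta}(\Delta^{s})^{2}\big)$. To get it I would note that $x_t = (1-\gamma)x^{s} + \gamma\zeta_t$ with the $\zeta_t$ i.i.d., zero-mean and $\mathrm{Cov}(\zeta_t) = \sigma^{2}_{\zeta}I$, so the cross terms vanish in expectation and $\E[x_t x_t^\top] = (1-\gamma)^2 x^{s}(x^{s})^\top + \gamma^{2}\sigma^{2}_{\zeta}I \succeq \gamma^{2}\sigma^{2}_{\zeta}I$; since $\norm{x_t} \le L$, a matrix Chernoff/Bernstein inequality yields $\lambda_{\min}(V_{T_0}) = \lambda_{\min}\big(\lambda I + \sum_{t=1}^{T_0} x_t x_t^\top\big) \ge \gamma^{2}\sigma^{2}_{\zeta}\,T_0$ once $T_0$ exceeds the stated threshold; this is precisely the estimate behind Lemma~\ref{lem:shrunk-polytope-subset}, so it may simply be cited from there. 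Finally I would combine everything: substitute the lower bound on $\lambda_{\min}(V_{T_0})$, use the monotonicity $\beta_{T_0}(\delta) \le \beta_{T}(\delta)$ and $2\sqrt{d} \le \sqrt{8d}$, and feed the result back into the Lipschitz bound, with a union bound keeping the total failure probability at $2\delta$.

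I expect the matrix-concentration step — obtaining the high-probability lower bound on $\lambda_{\min}(V_{T_0})$ and recognizing that the hypothesis on $T_0$ is exactly what forces it to hold — to be the only nonroutine part; the Lipschitz reduction, the composition of Lemmas~\ref{lem:fereydounian2020safe-lemma} and~\ref{lem:shrunk-polytope-subset}, the monotonicity of $\beta$, and the union bound are all straightforward. Indeed, if the proof of Lemma~\ref{lem:shrunk-polytope-subset} already records $\lambda_{\min}(V_{T_0}) \ge \gamma^{2}\sigma^{2}_{\zeta}\,T_0$ as an intermediate fact, this proposition reduces to the first two steps plus elementary arithmetic.
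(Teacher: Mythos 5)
Your proof follows the paper's argument essentially step for step: the Lipschitz reduction to $G\,T\,\|\hat{x}^\star - x^\star\|$, the composition of Lemma~\ref{lem:shrunk-polytope-subset} with Lemma~\ref{lem:fereydounian2020safe-lemma}, the matrix-Chernoff lower bound on $\lambda_{\min}(V_{T_0})$, and the union bound over the confidence-set and eigenvalue events. The one imprecision is that matrix Chernoff (applied with $\epsilon=1/2$, as in the paper) gives $\lambda_{\min}(V_{T_0})\ge \lambda + 0.5\,\gamma^{2}\sigma^{2}_{\zeta}T_0$ rather than the full $\gamma^{2}\sigma^{2}_{\zeta}T_0$ you claim, but the resulting factor $\sqrt{2}$ is exactly the slack you already allowed in passing from $2\sqrt{d}$ to $\sqrt{8d}$, so the stated bound still follows.
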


Note that, when we use $T_{0} = T^{2/3}$ in the above result, we get the regret due to Term III as $O(T^{2/3})$.

The proof of our main theorem can now be obtained by adding the regret due to Terms  I, II, and III.

\section{Simulation Results}
\label{sec:simulation-main}
In this section, we analyze the performance of our SO-PGD algorithm through experiments in  two different settings.

\paragraph{Experiment Setting:} 
We consider  a closed polytope of the form $\X^s = \{x\in \R^{2}: -x_{\max} \leq x_i \leq x_{\max}, i=1,2\}$ as the safe set. It is straight forward to see that the corresponding parameters are $A = [1,0; -1,0; 0,1; 0,-1]$ and $b = x_{\max}\times[1;1;1;1]$. We consider two sequences of functions,  $f_{1,t}$ and $f_{2,t}$, given by 
\begin{align*}
  f_{1,t}(x) = c_t \cdot (\sum_{i=1}^d x_i) + 1,~~ f_{2,t}(x) = \frac{1}{2} \|x - c_t \bar{x}\|^2_2,
\end{align*}
where $c_t$ is a real number  drawn i.i.d. from the set $[c_{\text{lower}}, c_{\text{upper}} ]$. We select $c_{\text{lower}}~\text{and}~c_{\text{upper}}$   appropriately from $\{0.5,1,1.5,2\}$ for different experiment settings. For $f_{2,t}$, $\bar{x}$ is randomly sampled from a standard Gaussian distribution, then  normalized and scaled by 2.5. 
The constraint noise sequence $w_{t}$s are  i.i.d. Gaussian with zero mean and covariance matrix $10^{-3} I$. Note that $f_{1,t}$s are linear function $f_{2,t}$s are a $1$-strongly convex function. 


For a fixed $T$, we first generate the sequence $c_{t}, t \in [1, T]$. Then, we find the optimal action in hindsight, $x^{\star}_{i} = \arg\min_{x \in \X^s} \sum_{t=1}^{T} f_{i,t}(x),~ i=1,2$, using a standard non-linear optimization function like \verb+fmincon+ from \verb+MATLAB+.  

We choose $\lambda = 0.5$ and $\delta = 10^{-3}$. Exploration noise is $\zeta_{t}$s are generated according to a standard Gaussian distribution and then normalized. The safe baseline action is selected  randomly from the set $\mathcal{X}^{s}$. We run the experiments with $T = 10^{6}$ and $T_{0} =  T^{2/3} = 10^{4}$.  We emphasize that for these values, the condition $T_0 \geq \frac{8 \beta^2 L^2}{\gamma^2 \sigma_{\zeta}^2 (\Delta^2)^2}$ specified in Theorem \ref{thm:main-theorem} is satisfied.


\paragraph{Safe exploration:} Figure \ref{fig:exploration} shows the safe baseline action and the actions taken during the safe exploration phase. As guaranteed by Lemma \ref{lem:safe-exploration-lemma-1}, all actions are strictly inside the safe set. 

\begin{figure}
\centering
\subcaptionbox{}{\includegraphics[width=0.3\textwidth,trim=120 225 150 230, clip]{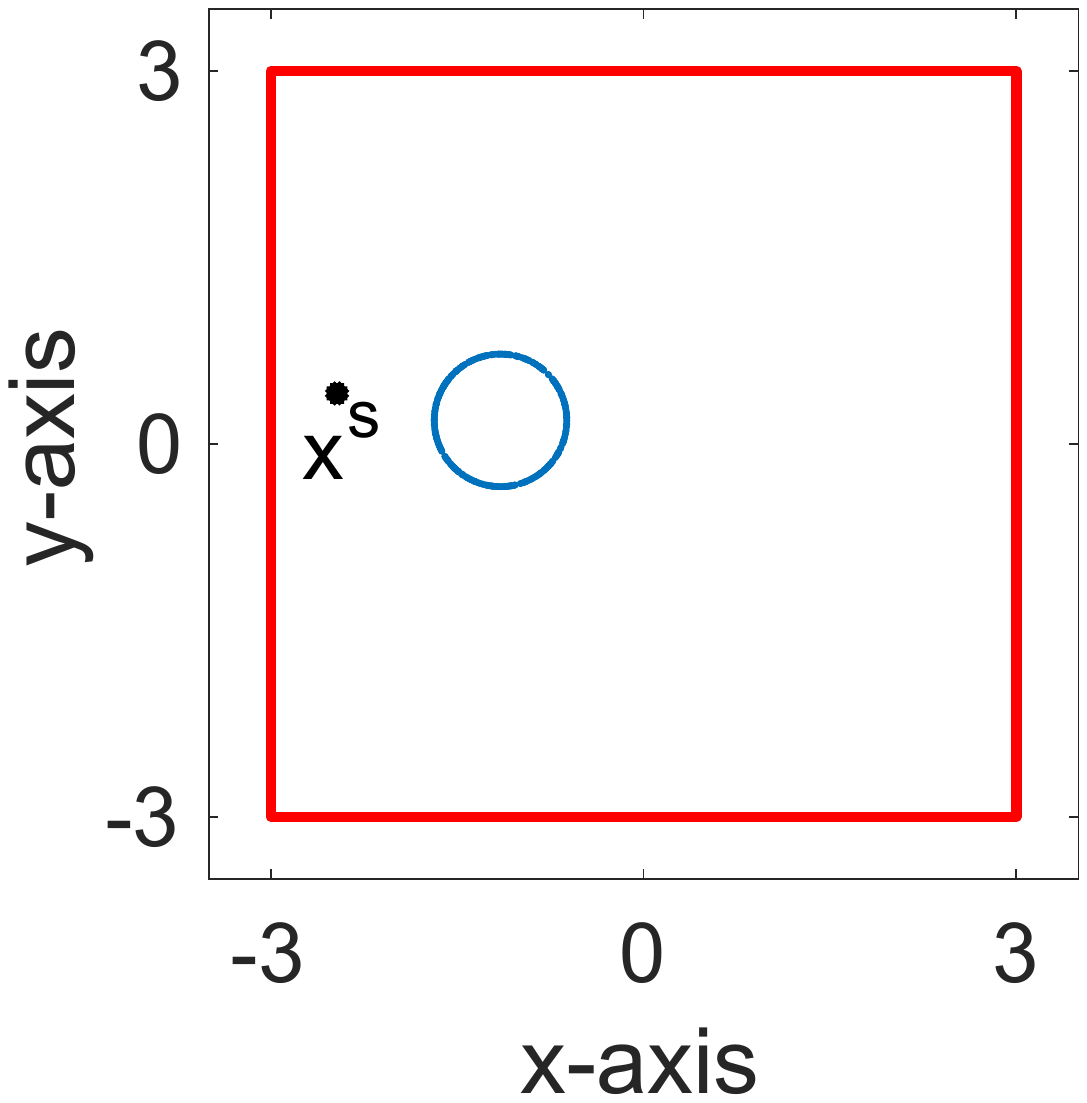}}%
\subcaptionbox{}{\includegraphics[width=0.3\textwidth,trim=120 225 150 230, clip]{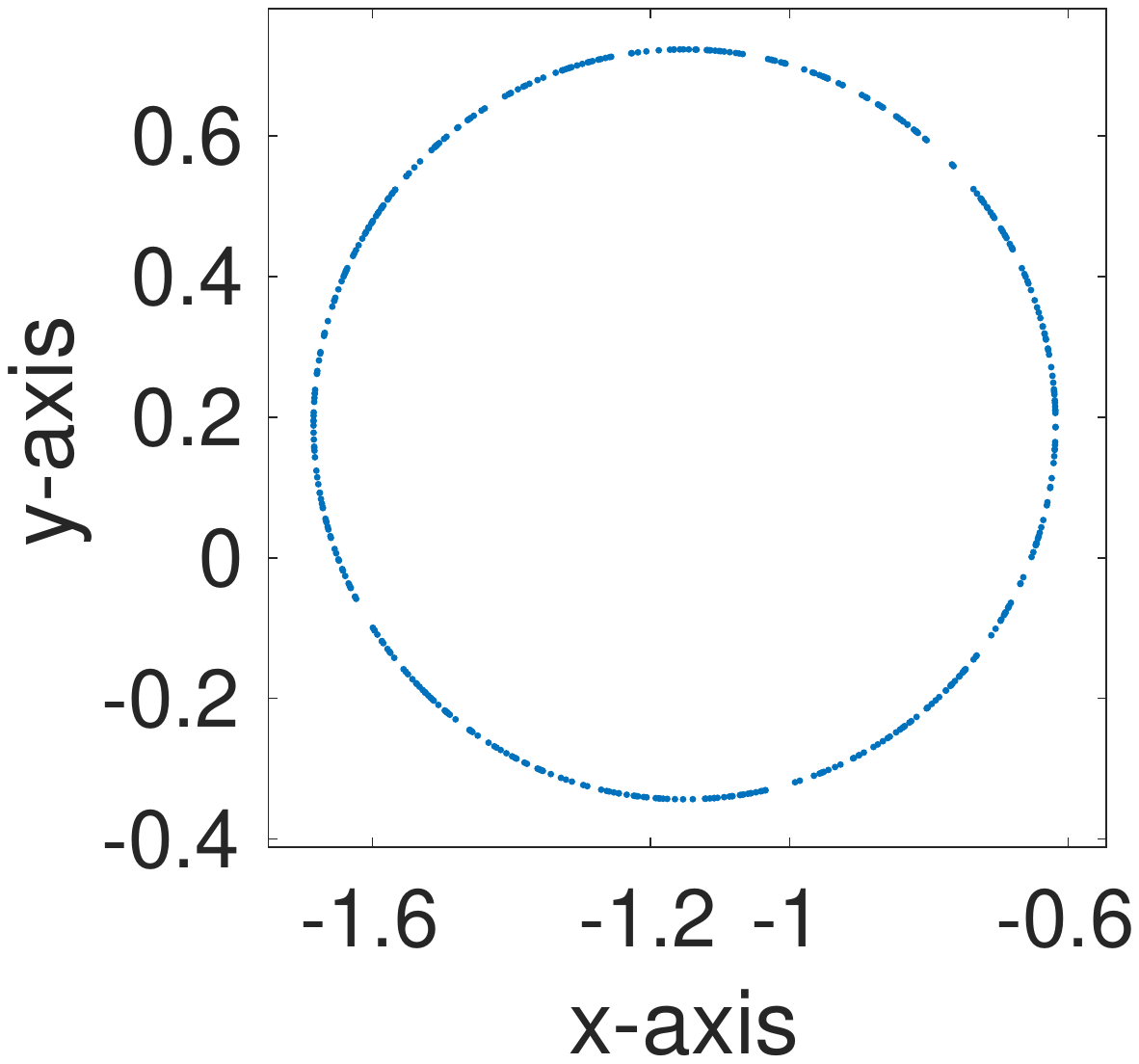}}%
\caption{(a) Shows the  safe set $\mathcal{X}^{s}$ (marked in red), safe baseline action $x^{s}$, and  actions taken during the safe exploration phase (marked in blue).  All the actions lie  inside $\X^s$. (b) Shows zoom-in view of the actions shown in (a).}
\label{fig:exploration}
\end{figure}


\paragraph{Conservative safe set estimation:} 
\begin{figure}
\centering
\subcaptionbox{}{\includegraphics[width=0.35\textwidth,trim=50 180 70 210, clip]{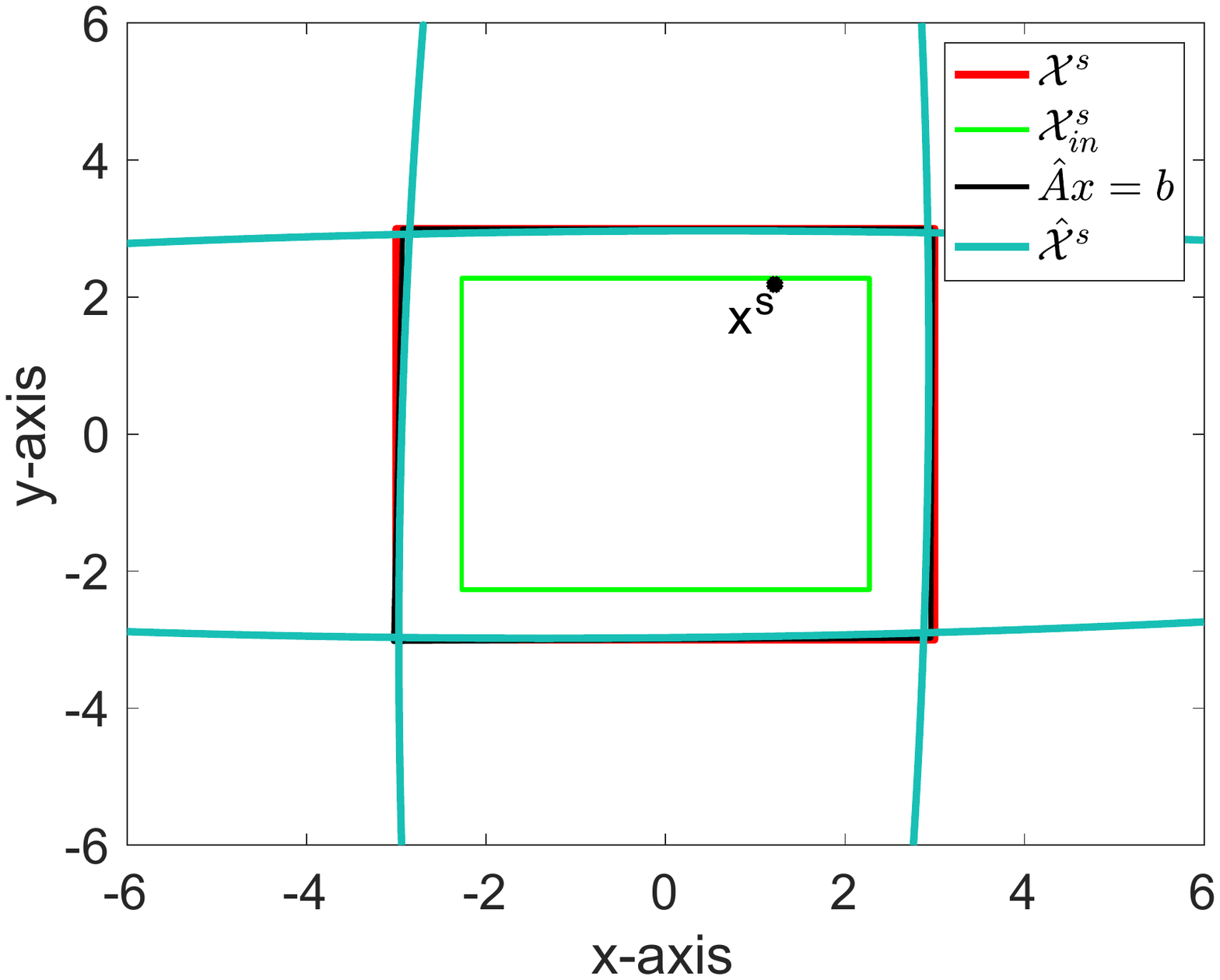}}%
\subcaptionbox{}{\includegraphics[width=0.3\textwidth,trim=120 220 150 230, clip]{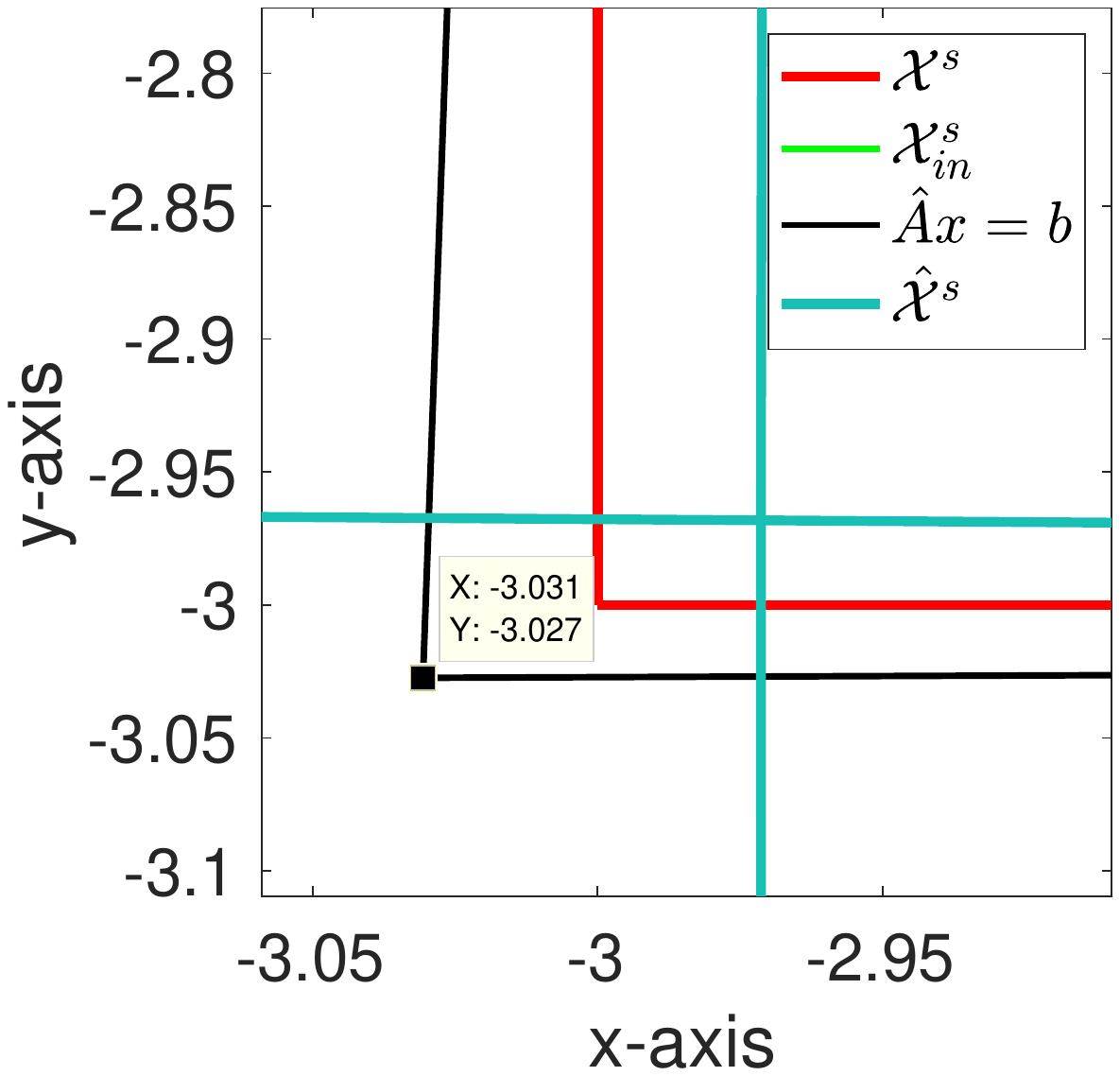}}%
\subcaptionbox{}{\includegraphics[width=0.3\textwidth,trim=120 220 150 230, clip]{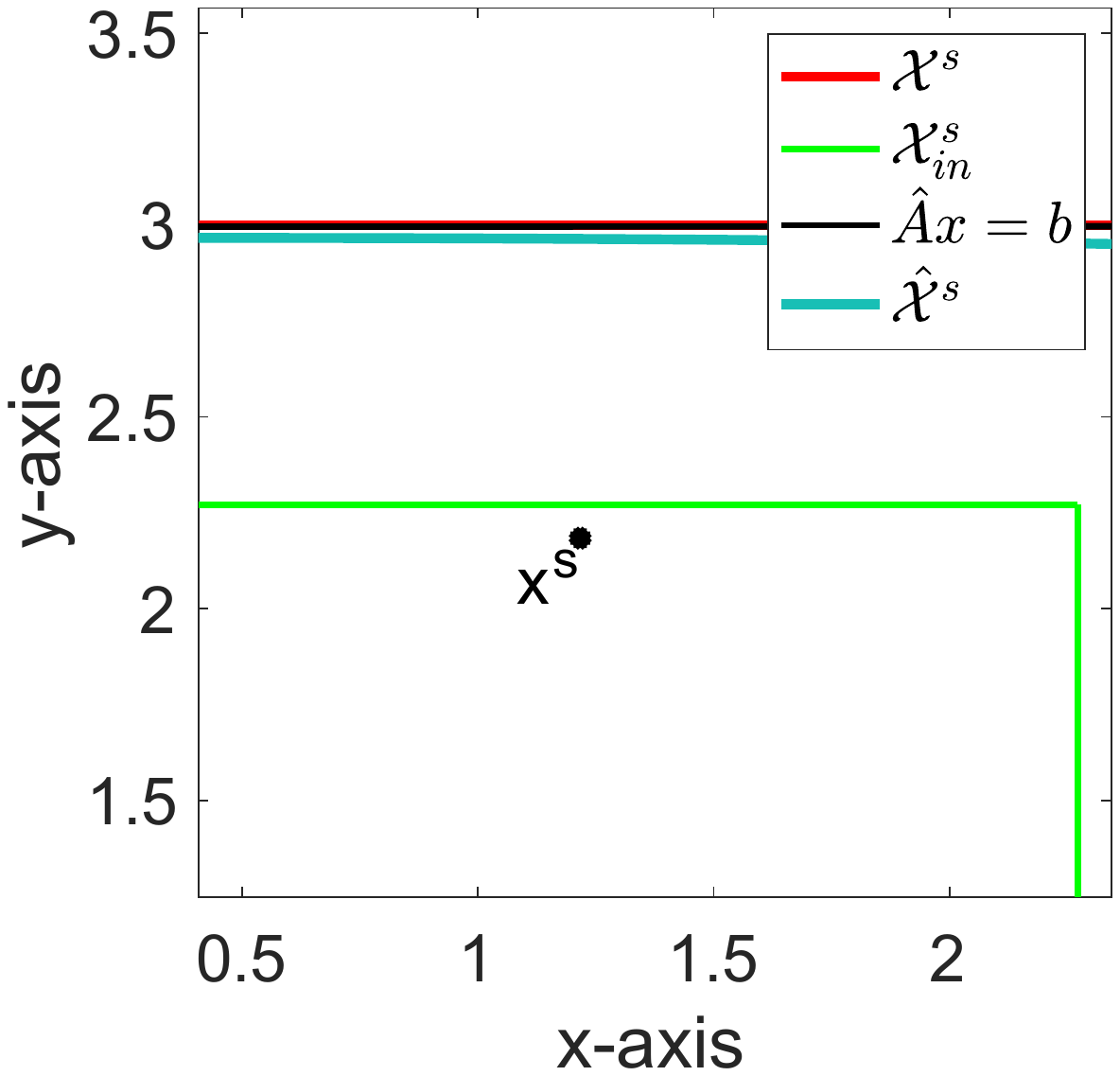}}%
\caption{(a) Shows the true safe set ($\X^s$),  the conservative safe set estimate ($\hat{\mathcal{X}}^{s}$), the `shrunk polytope' ($\X^s_{\text{in}}$), and the naive least squares estimate of the safe set ($\hat{A}x = b$). (b) Shows a zoomed-in view of the third quadrant of (a). (c) Shows a zoomed-in view of the region around $x^s$ to show that it lies inside $\X^s_{\text{in}}$.}
\label{fig:estimation}
\end{figure}

\begin{figure*}[h]
\begin{center}
    \includegraphics[width=0.99\textwidth,,trim=10 140 10 140, clip]{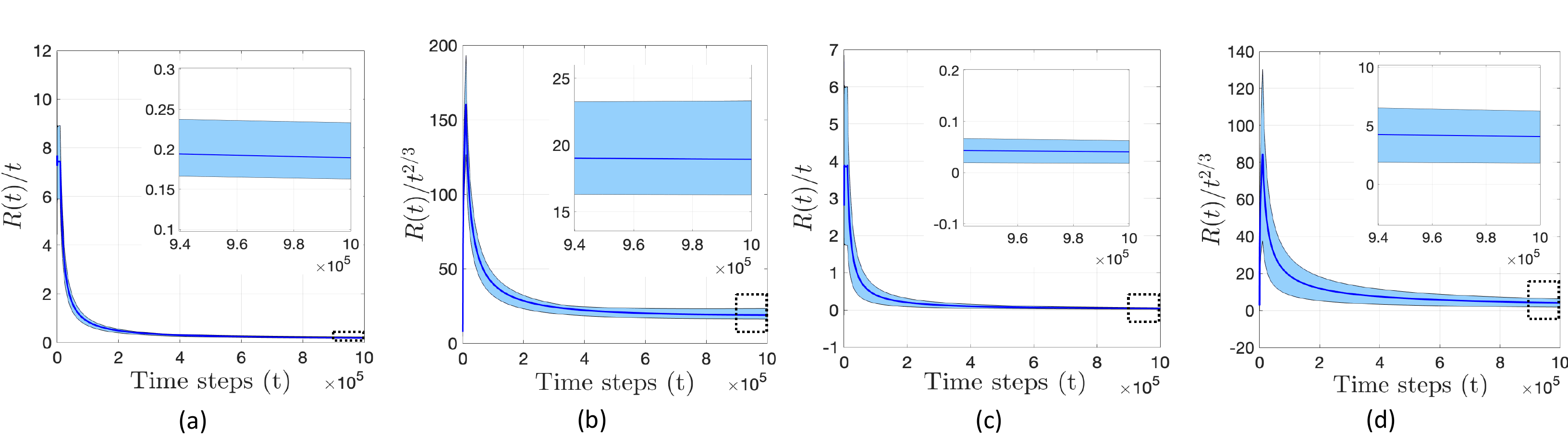}
    \caption{(a) $R(t)/t$ vs $t$ for $f_1$. (b) $R(t)/t^{2/3}$ vs $t$ for $f_1$. (c) $R(t)/t$ vs $t$ for $f_2$. (d)  $R(t)/t^{2/3}$ vs $t$ for $f_2$. All the plots are averaged over 6 random realizations. The light blue region shows the error (max - min) in mean value. The subplots in the top right corner are zoomed-in view into the final $60000$ time steps. $R(t)/t$ decays to zero for both $f_1$ and $f_2$. $R(t)/t^{2/3}$ converges to a constant value $\sim20$ for $f_1$, and $\sim4$ for $f_2$.}
    \label{fig:regret_plots}
\end{center}
\end{figure*}

Fig. \ref{fig:estimation}(a)  shows  the true safe set ($\X^s$),  the conservative safe set estimate ($\hat{\mathcal{X}}^{s}$), and  the `shrunk polytope' ($\X^s_{\text{in}}$). We also show the polytope obtained using the naive least squares estimate, $\{x : \hat{A} x \leq b\}$, where $\hat{A}$ is obtained according to  \eqref{eq:A-hat-estimate}. Please see that $\X^s_{\text{in}} \subset  \hat{\mathcal{X}}^{s}\subset \X^s$, as guaranteed by our results in Lemma \ref{lem:conservative-subset} and Lemma \ref{lem:shrunk-polytope-subset}. It can also be seen that the polytope obtained using the naive least squares estimate need not be a subset of the safe set $\X^s$. We highlight this aspect in Fig. \ref{fig:estimation}(b). So, an OCO algorithm that uses this naive estimate cannot guarantee safety constraint satisfaction at all time steps.  Fig. \ref{fig:estimation}(c) also shows that the safe baseline action $x^{s}$ is inside the `shrunk polytope' $\X^s_{\text{in}}$, as guaranteed by our theory (see the proof of Lemma \ref{lem:shrunk-polytope-subset}). 

\begin{figure}[h!]
    \centering
    \includegraphics[width=0.35\textwidth,trim=10 130 20 140, clip]{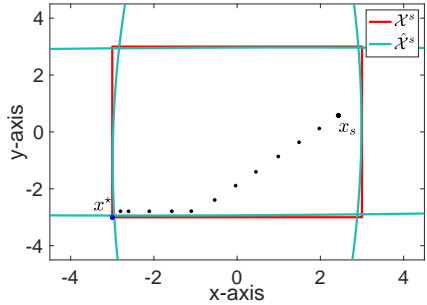}
    \caption{The (sampled) sequence of actions taken by the SO-PGD algorithm in the online projected gradient descent phase. All the actions lie  inside the true safe set $\X^s$.}
    \label{fig:so-pgd-trjectory}
\end{figure}

\paragraph{Online gradient descent:}  Fig. \ref{fig:so-pgd-trjectory} shows the sequence of  actions generated by the SO-PGD algorithm in one experiment. We do not plot all the actions, but only a regularly sampled version of the sequence of actions to avoid crowding the plot. Notice that these actions lie inside the safe set $\mathcal{X}^{s}$

\paragraph{Regret performance:} The regret performance of the SO-PGD algorithm is shown in Fig. \ref{fig:regret_plots}. Instead of plotting regret directly, we plot $R(t)/t$ and $R(t)/t^{2/3}$, where $R(t)$ is the cumulative regret incurred until time $t$. From the figures, it is easy to observe that $R(t)/t$ goes to zero, ensuring that the regret is indeed sublinear. Also, $R(t)/t^{2/3}$ converges to a constant value, indicating that the regret of the SO-PGD algoirthm is indeed $O(T^{2/3})$, as guaranteed by Theorem \ref{thm:main-theorem}.

\section{Conclusion}
In this work,  we addressed the problem  of safe online convex optimization, where the action at each time step must satisfy a set of linear safety constraints. The parameters that specify the linear safety constraints are unknown to the algorithm. We proposed an algorithm called SO-PGD algorithm to solve this problem. Our algorithm comprises of two phases, a safe exploration phase to estimate the unknown safe set and an online gradient descent phase for online optimization. We showed that by carefully balancing the duration of the exploration phase and online optimization phase, the SO-PGD algorithm can achieve $O(T^{2/3})$ regret while satisfying the safety constraints at all times step, with high probability. To the best of our knowledge, this is the first such result in the OCO literature, even in a setting with liner constraints.

In the future, we plan to extend our results to develop projection-free safe OCO algorithms.  We will also investigate if it is possible to achieve $O(T^{1/2})$ regret with no constraint violation, without making any additional strong assumption.


\bibliography{References.bib}
\appendix
\onecolumn
\section{Appendix}

\subsection{Preliminaries}

We use following well known result from linear bandits literature. 
\begin{theorem}[Theorem 2,\cite{abbasi2011improved}]
\label{theorem:ellipsoid_region_yadkori}
Let $\{F_t\}_{t=0}^{\infty}$ be a filtration. Let $\{\eta_t\}_{t=1}^{\infty}$ be a real valued stochastic process such that $\eta_t$ is $F_t$-measurable and $\eta_t$ is conditionally $R$-sub-Guassian for some $R\geq 0$. Let $\{X_t\}_{t=1}^{\infty}$ be an $\R^d$-valued stochastic process such that $X_t$ is $F_{t-1}$-measurable. Let $V_t$ be defined as $\lambda I + \sum^{t}_{t=1} X_{t} X^{\top}_{t}$ for $\lambda>0$. Let, $Y_t = a^{\top}x + \eta_t$. Let $\hat{a}_{t} = V_t^{-1}\sum_{i=1}^{t} Y_{i} X_{i}$ be the $\ell_{2}$-regularized least squares estimate of $a$. Assume that  $\|a\| \leq L_A$ and  $\|X_t\| \leq L, \forall t$. Then, for any $\delta > 0$,  with a probability at least $1-\delta$, the true parameter $a$ lies in the set  
\begin{align}
    C_t &= \Big\{a \in \R^d: \|\hat{a} - a\|_{V_t} \leq R\sqrt{d \log\left(\frac{1+tL^2/\lambda}{\delta} \right)} + \sqrt{\lambda}L_A \Big\},
\end{align}
for all $t\geq1$. 
\end{theorem}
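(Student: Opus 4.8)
The plan is to prove this as a self-normalized tail bound for the vector-valued martingale $S_t := \sum_{i=1}^t \eta_i X_i$, via the method of mixtures. First I would reduce the claim to a bound on $\|S_t\|_{V_t^{-1}}$. Substituting $Y_i = a^\top X_i + \eta_i$ into the definition $\hat a_t = V_t^{-1}\sum_{i=1}^t Y_i X_i$ and using $\sum_{i=1}^t X_i X_i^\top = V_t - \lambda I$ gives the algebraic identity $\hat a_t - a = V_t^{-1}(S_t - \lambda a)$. Since $\|V_t^{-1} z\|_{V_t} = \|z\|_{V_t^{-1}}$ for any $z$, the triangle inequality yields
\begin{align*}
\|\hat a_t - a\|_{V_t} \le \|S_t\|_{V_t^{-1}} + \lambda \|a\|_{V_t^{-1}}.
\end{align*}
The regularization term is controlled deterministically: because $V_t \succeq \lambda I$ we have $V_t^{-1} \preceq \lambda^{-1} I$, so $\lambda\|a\|_{V_t^{-1}} \le \sqrt{\lambda}\,\|a\| \le \sqrt{\lambda}\,L_A$, which is exactly the additive term of the claimed bound. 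Everything then reduces to showing that $\|S_t\|_{V_t^{-1}} \le R\sqrt{d\log((1+tL^2/\lambda)/\delta)}$ holds for all $t\ge 1$ with probability at least $1-\delta$.

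Second, I would build the exponential supermartingale underlying the self-normalized inequality. For a fixed direction $x \in \R^d$, define $M_t^x = \exp\big(R^{-1} x^\top S_t - \tfrac12 \sum_{i=1}^t (x^\top X_i)^2\big)$. Because $X_i$ is $F_{i-1}$-measurable and $\eta_i$ is conditionally $R$-sub-Gaussian, conditioning on $F_{i-1}$ gives $\E[\exp(R^{-1} x^\top X_i \eta_i)\mid F_{i-1}] \le \exp(\tfrac12 (x^\top X_i)^2)$, so each incremental factor has conditional mean at most one and $\{M_t^x\}$ is a nonnegative supermartingale with $\E[M_t^x] \le 1$.

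Third, and this is the technical heart, I would apply the method of mixtures. Rather than committing to one direction, I would integrate $M_t^x$ against a Gaussian prior $x \sim \mathcal{N}(0, \lambda^{-1} I)$ to form $M_t = \int_{\R^d} M_t^x \, dh(x)$. A Tonelli argument (the integrand is nonnegative) shows $M_t$ is again a nonnegative supermartingale with $\E[M_t]\le 1$, and, crucially, the Gaussian integral can be evaluated in closed form by completing the square, giving $M_t = (\det V_0 / \det V_t)^{1/2}\exp\big(\|S_t\|_{V_t^{-1}}^2/(2R^2)\big)$ with $V_0 = \lambda I$. The main obstacle is making the conclusion hold uniformly over all $t \ge 1$: I would run the maximal inequality through a stopping-time argument, evaluating $M_\tau$ at a suitable stopping time and invoking the convergence/optional-stopping property of nonnegative supermartingales to obtain $\mathbb{P}(\sup_{t\ge1} M_t > 1/\delta) \le \delta$. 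On the complementary event, substituting the closed form and taking logarithms yields $\|S_t\|_{V_t^{-1}}^2 \le 2R^2\log\big(\det(V_t)^{1/2}\det(V_0)^{-1/2}/\delta\big)$ simultaneously for all $t$.

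Finally, I would bound the determinant ratio to put the estimate in the stated form. By the AM--GM inequality applied to the eigenvalues, $\det V_t \le (\mathrm{tr}(V_t)/d)^d$, and $\mathrm{tr}(V_t) = d\lambda + \sum_{i=1}^t \|X_i\|^2 \le d\lambda + tL^2$; together with $\det V_0 = \lambda^d$ this gives $\det(V_t)/\det(V_0) \le (1 + tL^2/(d\lambda))^d \le (1+tL^2/\lambda)^d$. Plugging this in and combining with the $\sqrt{\lambda}\,L_A$ bias term recovers exactly the claimed confidence set $C_t$, completing the proof. The only nonroutine pieces are the measure-theoretic justification of the mixture supermartingale and the stopping-time passage to the uniform-in-$t$ statement; the Gaussian integral evaluation and the determinant estimate are standard computations.
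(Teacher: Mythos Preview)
Your proposal is correct and faithfully reproduces the original argument of \cite{abbasi2011improved}: the reduction to bounding $\|S_t\|_{V_t^{-1}}$, the pointwise exponential supermartingale $M_t^x$, the Gaussian mixture that yields the closed-form self-normalized quantity, the stopping-time passage to a uniform-in-$t$ bound, and the trace/AM--GM determinant estimate are exactly the ingredients of that paper's Theorem~2.

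Note, however, that the present paper does not supply its own proof of this statement. It is quoted in the Preliminaries section of the appendix as a known result from \cite{abbasi2011improved} and is used as a black box (to establish Lemma~\ref{lem:confidence-ball} via a union bound over the $m$ rows). So there is no ``paper's proof'' to compare against here; your write-up is simply the standard proof from the cited source, and it is sound.
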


We will use the following results on matrix Chernoff inequality  \cite[Theorem 5.1.1.]{tropp2015introduction}.

\begin{theorem}[Theorem 5.1.1. \cite{tropp2015introduction}] \label{theorem:matrix_chernoff}
Consider a finite sequence $\{X_k\}$ of independent, random, symmetric matrices with a common dimension $d$.  Assume that $
\lambda_{\min}(X_k) \geq 0$ and  $\lambda_{\max}(X_{k}) \leq L, \forall k.$
Introduce the random matrix $Y=\sum_{k}X_k$. Define the minimum eigenvalue $\mu_{\min}$ of the expectation $\mathbb{E}[Y]$ as $\mu_{\min} = \lambda_{\min}(\mathbb{E}[Y]) = \lambda_{\min} (\sum_{k}  \mathbb{E}[X_k])$. then, 
\begin{align}
    \mathbb{P}(\lambda_{\min}(Y) \leq \epsilon\mu_{\min}) \leq d~ \mathrm{e}^{-(1-\epsilon)^2 \mu_{\min}/2L}~~\text{for any}~~\epsilon \in (0,1).
\end{align}
\end{theorem}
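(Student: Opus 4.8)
The plan is to apply the matrix Laplace transform (matrix Chernoff) method to the lower tail of $\lambda_{\min}(Y)$. First I would fix a parameter $\theta > 0$, use the identity $-\lambda_{\min}(Y) = \lambda_{\max}(-Y)$, and apply Markov's inequality to write
\begin{align*}
\mathbb{P}(\lambda_{\min}(Y) \leq t) = \mathbb{P}\!\left(e^{-\theta \lambda_{\min}(Y)} \geq e^{-\theta t}\right) \leq e^{\theta t}\, \mathbb{E}\!\left[e^{-\theta \lambda_{\min}(Y)}\right].
\end{align*}
Since $e^{-\theta\lambda_{\min}(Y)} = \lambda_{\max}(e^{-\theta Y})$ and the exponential of a symmetric matrix is positive semidefinite, I can bound the largest eigenvalue by the trace to obtain $\mathbb{E}[e^{-\theta\lambda_{\min}(Y)}] \leq \mathbb{E}[\text{tr}\, e^{-\theta Y}]$. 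This reduces the problem to controlling the trace of the matrix moment generating function of the sum $Y = \sum_k X_k$.

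The central (and hardest) step is to handle the fact that $Y$ is a sum of independent, generally non-commuting, random matrices. Here I would invoke the subadditivity of the matrix cumulant generating function (a consequence of Lieb's concavity theorem): for independent symmetric $X_k$,
\begin{align*}
\mathbb{E}\!\left[\text{tr}\, e^{-\theta \sum_k X_k}\right] \leq \text{tr}\, \exp\!\left(\sum_k \log \mathbb{E}\!\left[e^{-\theta X_k}\right]\right).
\end{align*}
To bound each factor I would use the hypotheses $0 \preceq X_k \preceq L\,I$ together with the scalar inequality $e^{-\theta x} \leq 1 - \tfrac{x}{L}(1 - e^{-\theta L})$, valid for $x \in [0,L]$ by convexity. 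Applying this through the transfer rule for matrix functions gives $\mathbb{E}[e^{-\theta X_k}] \preceq I - \tfrac{1 - e^{-\theta L}}{L}\,\mathbb{E}[X_k]$.

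Next I would assemble the pieces using standard matrix-analytic facts. Operator monotonicity of the logarithm together with the bound $\log(I + M) \preceq M$ gives $\log \mathbb{E}[e^{-\theta X_k}] \preceq -\tfrac{1 - e^{-\theta L}}{L}\,\mathbb{E}[X_k]$, so summing over $k$ yields $\sum_k \log \mathbb{E}[e^{-\theta X_k}] \preceq -g(\theta)\,\mathbb{E}[Y]$ with $g(\theta) = (1 - e^{-\theta L})/L > 0$. Monotonicity of the trace exponential with respect to the semidefinite order then gives $\text{tr}\,\exp(\sum_k \log \mathbb{E}[e^{-\theta X_k}]) \leq \text{tr}\, e^{-g(\theta)\mathbb{E}[Y]} \leq d\,\lambda_{\max}(e^{-g(\theta)\mathbb{E}[Y]}) = d\, e^{-g(\theta)\mu_{\min}}$, where the final equality uses $g(\theta) > 0$ so that the smallest eigenvalue of $\mathbb{E}[Y]$ governs the exponent. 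Combining with the first display,
\begin{align*}
\mathbb{P}(\lambda_{\min}(Y) \leq \epsilon\mu_{\min}) \leq d\, \exp\!\left(\theta \epsilon \mu_{\min} - \tfrac{1 - e^{-\theta L}}{L}\,\mu_{\min}\right).
\end{align*}

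Finally I would optimize over $\theta$. Differentiating the exponent shows the minimizer is $\theta^\star = \log(1/\epsilon)/L$, which is positive precisely because $\epsilon \in (0,1)$; substituting yields the sharp exponent $\tfrac{\mu_{\min}}{L}\bigl(\epsilon - \epsilon\log\epsilon - 1\bigr)$. To reach the stated form I would then apply the elementary inequality $\epsilon - \epsilon\log\epsilon - 1 \leq -\tfrac{(1-\epsilon)^2}{2}$ for $\epsilon \in (0,1)$, which follows by checking that the difference vanishes to second order at $\epsilon = 1$ and has nonpositive second derivative on $(0,1)$. I expect the main obstacle to be the non-commutativity of the $X_k$: the subadditivity of the matrix cumulant generating function is the one genuinely deep ingredient (resting on Lieb's theorem), whereas the remaining steps—the transfer rule, operator monotonicity of $\log$, and trace-exponential monotonicity—are mechanical applications of the standard matrix-analysis toolbox.
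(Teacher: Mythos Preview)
Your proof is correct and follows the standard matrix Laplace transform argument from Tropp's monograph, which is precisely the source the paper cites for this result; the paper itself does not supply a proof, treating the statement purely as an imported tool. In particular, your use of the subadditivity of the matrix cumulant generating function (via Lieb's theorem), the secant-line bound $e^{-\theta x}\le 1-\tfrac{x}{L}(1-e^{-\theta L})$ on $[0,L]$, the optimization yielding $\theta^\star=\log(1/\epsilon)/L$, and the final elementary inequality $\epsilon-\epsilon\log\epsilon-1\le -\tfrac{(1-\epsilon)^2}{2}$ are exactly the ingredients in Tropp's derivation of this simplified lower-tail bound.
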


\subsection{Proof of Lemma \ref{lem:safe-exploration-lemma-1}}
\begin{proof}
For any $t \in [1, T_{0}]$, and for any $i \in [1, m]$, we have 
\begin{align*}
     a^{\top}_{i} x_{t} &\stackrel{(i)}{=} a^{\top}_{i} ((1 - \gamma) x^{s} + \gamma \zeta_{t})  \\
    &=  (1 - \gamma) a^{\top}_{i} x^{s} + \gamma a^{\top}_{i} \zeta_{t} = (1 - \gamma) b^{s}_{i} + \gamma a^{\top}_{i} \zeta_{t} \stackrel{(ii)}{\leq} (1 - \gamma) b^{s}_{i} + \gamma L_{A} \min\{1, L\}
\end{align*}
 Here, we get $(i)$ by the definition of exploration action  \eqref{eq:safe-exploration-action-1}, and $(ii)$ by using the fact that $\max_{i} \norm{a_{i}} \leq L_{A}$ and $\norm{\zeta_{t}} \leq \min\{1, L\}$. Now, for $x_{t}$ to satisfy the safety constraint, it is sufficient to have $(1 - \gamma) b^{s}_{i} + \gamma  L_{A} \min\{1, L\} \leq b_{i}$, or equivalently,  $\gamma (L_{A} \min\{1, L\} - b^{s}_{i}) \leq (b_{i} - b^{s}_{i})$. This leads to the sufficient condition $\gamma L_{A}  \leq \min_{i}~(b_{i} - b^{s}_{i}) = \Delta^{s}$. 
\end{proof}

\subsection{Proof of Lemma \ref{lem:confidence-ball} and Lemma \ref{lem:conservative-subset}}

\begin{proof}[Proof of Lemma \ref{lem:confidence-ball}]
Using Theorem \ref{theorem:ellipsoid_region_yadkori}, for any $i \in [1, m]$, we get 
\begin{align*}
    \mathbb{P}(a_{i} \in \mathcal{C}_{i}(\delta)) \geq 1 - \delta/m.
\end{align*}
Now, we get the desired result by applying union bound. 
\end{proof}

\begin{proof}[Proof of Lemma \ref{lem:conservative-subset}]

From Lemma \ref{lem:confidence-ball},  $a_{i} \in \mathcal{C}_{i}(\delta)$ for all $i \in [1, m]$ with a probability greater than $(1-\delta)$. Then, by definition,  for any $x \in \hat{\X}^s$, we have $a^{\top}_{i} x \leq b_{i}$ for all $i \in [1, m]$, with probability greater than $1 - \delta$. So,  $\hat{\X}^s \subseteq \X^s$ with probability at least $1-\delta$.
\end{proof}

\subsection{Proof of Proposition \ref{prop:opgd-regret}}

This results follows from the standard regret analysis of online projected gradient descent algorithm \cite[Theorem 3.1]{hazan2016introduction}. We reproduce the result here for completeness.

\begin{proof}
By  convexity of the function $f_t$
\begin{align}
\label{eq:odg-step-1}
    f_{t}(x_{t}) - f_{t}(\hat{x}^{\star}) \leq \nabla f_{t}(x_{t})^{\top} (x_{t} - \hat{x}^{\star}).
\end{align}

We will now upper bound $\nabla f_{t}(x_{t})^{\top} (x_{t} - \hat{x}^{\star})$ as follows:
\begin{align*} 
\norm{x_{t+1} - \hat{x}^{\star}}^{2} &= \norm{ \proj_{\hat{\X}^s} (x_t - \eta
\nabla f_{t}(x_{t})) - \hat{x}^{\star}}^{2} \leq \left\|x_t - \eta \nabla f_{t}(x_{t})-\hat{x}^{\star} \right\|^2 \nonumber \\
&=\|x_t- \hat{x}^{\star}\|^2 + \eta^2
\|\nabla f_{t}(x_{t})\|^2 -2 \eta \nabla f_{t}(x_{t})^\top (x_t - \hat{x}^{\star}),
\end{align*}
where the first inequality is by the Pythagorean theorem. Rearranging and using Assumption \ref{as:cost-function}, we get
\begin{align}
\label{eq:odg-step-2}
    2 \nabla f_{t}(x_{t})^\top (x_t - \hat{x}^{\star})\ \leq&\ \frac{ \|x_t-
\hat{x}^{\star}\|^2-\|x_{t+1} - \hat{x}^{\star}\|^2}{\eta} + \eta G^2.
\end{align}

Using \eqref{eq:odg-step-2} in \eqref{eq:odg-step-1} and taking summation, and using the fact that $\eta = 2L/G \sqrt{T}$ we get
\begin{align} 
 \sum_{t = T_0+1}^{T} \left(f_t(x_t)-f_t(\hat{x}^{\star}) \right ) &\leq  \sum_{t = T_0+1}^{T} \nabla f_{t}(x_{t})^\top (x_t - \hat{x}^{\star}) \leq  \sum_{t = T_0+1}^{T} \frac{ \|x_t-	\hat{x}^{\star}\|^2-\|x_{t+1}-\hat{x}^{\star}\|^2}{2 \eta} + \frac{G^{2}}{2}  \sum_{t = T_0+1}^{T} \eta \nonumber \\
 &\leq   \|x_{1} - \hat{x}^{\star}\|^2 \frac{1}{2 \eta} + \frac{G^{2}}{2}  T \eta \leq 2 L^2 \frac{1}{\eta} + \frac{G^{2}}{2}  T \eta \leq L G \sqrt{T} + L G \sqrt{T} = 2 L G \sqrt{T}.
\end{align}
\end{proof}

\subsection{Proof of Lemma \ref{lem:shrunk-polytope-subset} and Proposition \ref{prop:term-3-regret}}

One key step in proving Lemma \ref{lem:shrunk-polytope-subset} and Proposition \ref{prop:term-3-regret} is to get a high probability lower bound on $\lambda_{\min}(V_{T_{0}}$. We will use the matrix matrix Chernoff inequality for achieving this. We state this result as a lemma below. 

\begin{lemma}
\label{lem:lb-on-lambda-min}
For $T_{0} \geq \frac{8 L^{2}}{\gamma^{2} \sigma^{2}_{\zeta}} \log \frac{d}{\delta}$, we have 
\begin{align}
    \mathbb{P}(\lambda_{\min}(V_{T_{0}}) \geq \lambda +  0.5 \gamma^{2} \sigma^{2}_{\zeta} T_{0}) \geq (1 - \delta).
\end{align}
\end{lemma}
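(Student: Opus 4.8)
The plan is to apply the matrix Chernoff inequality (Theorem~\ref{theorem:matrix_chernoff}) to the sequence of rank-one matrices that build up $V_{T_0}$. Recall that $V_{T_0} = \lambda I + \sum_{t=1}^{T_0} x_t x_t^\top$, where $x_t = (1-\gamma)x^s + \gamma \zeta_t$ is the (random) exploration action. The key difficulty is that the $x_t x_t^\top$ are \emph{not} centered, and they are not bounded by $L$ in the operator sense in a way that is tight enough — so I would not apply Chernoff directly to $\sum_t x_t x_t^\top$. Instead, the plan is to isolate the ``noisy'' part. Write
\begin{align*}
V_{T_0} = \lambda I + \sum_{t=1}^{T_0} x_t x_t^\top \succeq \lambda I + \gamma^2 \sum_{t=1}^{T_0} \zeta_t \zeta_t^\top + (\text{cross terms}),
\end{align*}
and the first step is to argue that, after expanding $x_t x_t^\top = (1-\gamma)^2 x^s (x^s)^\top + \gamma(1-\gamma)(x^s \zeta_t^\top + \zeta_t (x^s)^\top) + \gamma^2 \zeta_t \zeta_t^\top$, the deterministic PSD term $(1-\gamma)^2 x^s (x^s)^\top$ only helps (can be dropped), and the cross terms are mean-zero and can be handled — but it is cleaner to note that the dominant contribution to $\lambda_{\min}$ comes from $\gamma^2\sum_t \zeta_t\zeta_t^\top$. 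So I would instead directly lower bound $\lambda_{\min}(V_{T_0})$ via the variational characterization: for any unit vector $v$, $v^\top V_{T_0} v = \lambda + \sum_t (v^\top x_t)^2 \geq \lambda + \gamma^2 \sum_t (v^\top \zeta_t)^2 + 2\gamma(1-\gamma)(v^\top x^s)\sum_t v^\top\zeta_t + (1-\gamma)^2 T_0 (v^\top x^s)^2$, but controlling this uniformly over $v$ is exactly what the matrix Chernoff bound does for us if we apply it to $\{\gamma^2 \zeta_t \zeta_t^\top\}$.

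So the concrete steps are: (1) Set $X_t = \gamma^2 \zeta_t \zeta_t^\top$. These are independent, symmetric, PSD, with $\lambda_{\max}(X_t) = \gamma^2 \|\zeta_t\|^2 \leq \gamma^2 \min\{1,L\}^2 \leq \gamma^2 L^2 =: \text{(the bound ``}L\text{'' in Theorem~\ref{theorem:matrix_chernoff})}$ — actually I should be careful and use the bound $\gamma^2\min\{1,L^2\}$; using $\gamma^2 L^2$ is safe. (2) Compute $\mathbb{E}[X_t] = \gamma^2 \mathrm{Cov}(\zeta_t) = \gamma^2 \sigma_\zeta^2 I$ (using $\mathbb{E}[\zeta_t]=0$ so $\mathrm{Cov}(\zeta_t)=\mathbb{E}[\zeta_t\zeta_t^\top]=\sigma_\zeta^2 I$), hence $\mathbb{E}[Y] = \gamma^2 \sigma_\zeta^2 T_0 I$ and $\mu_{\min} = \gamma^2 \sigma_\zeta^2 T_0$. (3) Apply Theorem~\ref{theorem:matrix_chernoff} with $\epsilon = 1/2$: $\mathbb{P}(\lambda_{\min}(Y) \leq \tfrac12 \gamma^2\sigma_\zeta^2 T_0) \leq d\, e^{-(1/4)\gamma^2\sigma_\zeta^2 T_0 / (2\gamma^2 L^2)} = d\, e^{-\gamma^2\sigma_\zeta^2 T_0/(8 L^2)}$. (4) Impose $T_0 \geq \tfrac{8L^2}{\gamma^2\sigma_\zeta^2}\log\tfrac{d}{\delta}$ so that the right-hand side is at most $\delta$. (5) Finally, since $V_{T_0} = \lambda I + \sum_t x_t x_t^\top \succeq \lambda I + \sum_t X_t = \lambda I + Y$ (because $x_t x_t^\top \succeq \gamma^2\zeta_t\zeta_t^\top$? — this needs the observation that $x_t x_t^\top - \gamma^2\zeta_t\zeta_t^\top$ is \emph{not} automatically PSD, so instead I use $\lambda_{\min}(A+B) \geq \lambda_{\min}(A) + \lambda_{\min}(B)$ with $A = \sum_t[(1-\gamma)^2 x^s(x^s)^\top + \gamma(1-\gamma)(\cdots)]$ — hmm). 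Let me reconsider: the clean route is Weyl's inequality on $V_{T_0} = (\lambda I + \gamma^2\sum_t\zeta_t\zeta_t^\top) + \sum_t[(1-\gamma)^2 x^s(x^s)^\top + \gamma(1-\gamma)(x^s\zeta_t^\top+\zeta_t(x^s)^\top)]$ won't work because the second bracket isn't PSD.

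The main obstacle, therefore, is handling the cross terms $\gamma(1-\gamma)(x^s\zeta_t^\top + \zeta_t(x^s)^\top)$ rigorously. The cleanest fix: complete the square. Observe that $x_t x_t^\top = \big((1-\gamma)x^s + \gamma\zeta_t\big)\big((1-\gamma)x^s+\gamma\zeta_t\big)^\top \succeq 0$ always, but more usefully, for any fixed direction $v$, $(v^\top x_t)^2 = \big((1-\gamma)v^\top x^s + \gamma v^\top\zeta_t\big)^2$. Rather than lower-bounding pointwise, I would symmetrize: note $\mathbb{E}[\sum_t x_t x_t^\top] = T_0\big((1-\gamma)^2 x^s(x^s)^\top + \gamma^2\sigma_\zeta^2 I\big) \succeq \gamma^2\sigma_\zeta^2 T_0 I$, and apply the matrix Chernoff bound directly to $\{x_t x_t^\top\}$ itself: these are independent, PSD, with $\lambda_{\max}(x_t x_t^\top) = \|x_t\|^2 \leq L^2$, and $\mu_{\min} = \lambda_{\min}(T_0[(1-\gamma)^2 x^s(x^s)^\top + \gamma^2\sigma_\zeta^2 I]) \geq \gamma^2\sigma_\zeta^2 T_0$. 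Then Theorem~\ref{theorem:matrix_chernoff} with $\epsilon=1/2$ gives $\mathbb{P}(\lambda_{\min}(\sum_t x_tx_t^\top) \leq \tfrac12\gamma^2\sigma_\zeta^2 T_0) \leq \mathbb{P}(\lambda_{\min}(\sum_t x_t x_t^\top) \leq \tfrac12\mu_{\min}) \leq d\,e^{-\mu_{\min}/(8L^2)} \leq d\,e^{-\gamma^2\sigma_\zeta^2 T_0/(8L^2)} \leq \delta$ under the stated condition on $T_0$; and on the complementary event $\lambda_{\min}(V_{T_0}) = \lambda + \lambda_{\min}(\sum_t x_t x_t^\top) \geq \lambda + \tfrac12\gamma^2\sigma_\zeta^2 T_0$, which is exactly the claim. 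The one subtlety to double-check is the direction of the $\mu_{\min}$ inequality in applying the tail bound — since the bound $\mathbb{P}(\lambda_{\min}(Y)\leq\epsilon\mu_{\min})\leq d e^{-(1-\epsilon)^2\mu_{\min}/2L}$ is monotone in $\mu_{\min}$ in the right way (larger true $\mu_{\min}$ only makes $\lambda_{\min}$ larger), replacing $\mu_{\min}$ by its lower bound $\gamma^2\sigma_\zeta^2 T_0$ in both the threshold and the exponent is valid. This completes the proof sketch.
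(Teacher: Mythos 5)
Your final argument is correct and is essentially identical to the paper's proof: apply the matrix Chernoff bound (Theorem~\ref{theorem:matrix_chernoff}) directly to the rank-one summands $x_t x_t^{\top}$ with $\lambda_{\max}(x_t x_t^{\top}) \leq L^2$, lower-bound $\mu_{\min} = \lambda_{\min}\bigl(T_0[(1-\gamma)^2 x^s (x^s)^{\top} + \gamma^2\sigma_{\zeta}^2 I]\bigr) \geq \gamma^2\sigma_{\zeta}^2 T_0$ by dropping the PSD rank-one term, take $\epsilon = 1/2$, and add back $\lambda I$. Your initial detours (Chernoff on $\gamma^2\zeta_t\zeta_t^{\top}$ alone, which fails because $x_t x_t^{\top} \not\succeq \gamma^2\zeta_t\zeta_t^{\top}$) are correctly abandoned, and the monotonicity check on $\mu_{\min}$ in both the threshold and the exponent is exactly the step the paper also uses.
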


\begin{proof}
For $t \in [1, T_{0}]$, $x_{t} = (1 - \gamma) x^{s} + \gamma \zeta_{t}$, where $\zeta_{t}$s  are  zero mean i.i.d. random vectors such that $\norm{ \zeta_{t}} \leq \min\{1, L\}$ and $\mathbb{E}[\zeta_{t} \zeta^{\top}_{t}] = \sigma^{2}_{\zeta} I$. Let $X_{t} = x_{t} x^{\top}_{t}$. Then, $X_{t}$ is symmetric and positive semidefinite. So, $\lambda_{\min}(X_{t}) \geq 0$. Also, $\lambda_{\max}(X_{t}) \leq \norm{x_{t}}^{2} \leq L^{2}$. We will also get that $\mathbb{E}[X_{t}] = (1 - \gamma)^{2} x^{s}  (x^{s})^{\top} + \gamma^{2} \sigma^{2}_{\zeta} I$.

Let $Y = \sum^{T_{0}}_{t=1} X_{t}$ and $ \mu_{\min} = \lambda_{\min}(\mathbb{E}[Y])$. Then, 
\begin{align}
\label{eq:pflem:lb-on-lambda-min-st1}
    \mu_{\min} = \lambda_{\min}(\mathbb{E}[Y]) = \lambda_{\min}(\sum^{T_{0}}_{t=1} \mathbb{E}[X_{t}]) = \lambda_{\min}(T_{0} ((1 - \gamma)^{2} x^{s}  (x^{s})^{\top} + \gamma^{2} \sigma^{2}_{\zeta} I)) \geq  \gamma^{2} \sigma^{2}_{\zeta} T_{0}.
\end{align}

Now, using the matrix Chernoff inequality stated in Theorem \ref{theorem:matrix_chernoff}, and the above inequality \eqref{eq:pflem:lb-on-lambda-min-st1}, we get
\begin{align}
    \mathbb{P}(\lambda_{\min}(Y) \leq \epsilon \gamma^{2} \sigma^{2}_{\zeta} T_{0}) \leq \mathbb{P}(\lambda_{\min}(Y) \leq \epsilon \mu_{\min}) \leq d ~ \text{exp}\left(- \frac{(1-\epsilon)^2 \mu_{\min}}{ 2 L^{2}} \right) \leq d ~ \text{exp}\left(- \frac{(1-\epsilon)^2 \gamma^{2} \sigma^{2}_{\zeta} T_{0}}{ 2 L^{2}} \right)
\end{align}

For $\epsilon = 1/2$, with $T_{0} \geq \frac{8 L^{2}}{\gamma^{2} \sigma^{2}_{\zeta}} \log \frac{d}{\delta}$, we get 
\begin{align}
    \mathbb{P}(\lambda_{\min}(Y) \geq 0.5 \gamma^{2} \sigma^{2}_{\zeta} T_{0}) \geq (1 - \delta).
\end{align}

Since $V_{T_{0}} = \lambda I + \sum^{T_{0}}_{t=1} x_{t} x^{\top}_{t} = \lambda I + Y$, we have  $\lambda_{\min}(V_{T_{0}}) \geq \lambda + \lambda_{\min}(Y)$. This will give,
\begin{align}
    \mathbb{P}(\lambda_{\min}(V_{T_{0}}) \geq \lambda +  0.5 \gamma^{2} \sigma^{2}_{\zeta} T_{0}) \geq (1 - \delta).
\end{align}

\end{proof}

Consider the events
\begin{align}
    \mathcal{E}_{A} = \{a_{i} \in \mathcal{C}_{i}(\delta), \forall i \in [1, m]\},~~ \mathcal{E}_{\lambda} = \{\lambda_{\min}(V_{T_{0}}) \geq \lambda +  0.5 \gamma^{2} \sigma^{2}_{\zeta} T_{0}\},~~ \mathcal{E} = \mathcal{E}_{A} \cap \mathcal{E}_{\lambda}. 
\end{align}
From Lemma \ref{lem:confidence-ball}, $\mathbb{P}( \mathcal{E}_{A}) \geq (1 - \delta)$. From Lemma \ref{lem:lb-on-lambda-min}, with  $T_{0} \geq \frac{8 L^{2}}{\gamma^{2} \sigma^{2}_{\zeta}} \log \frac{d}{\delta}$, $\mathbb{P}( \mathcal{E}_{\lambda}) \geq (1 - \delta)$. Then, using union bound, $\mathbb{P}(\mathcal{E}) \geq 1 - 2 \delta$. Our analysis for the proof of  Lemma \ref{lem:shrunk-polytope-subset} and Proposition \ref{prop:term-3-regret} will be conditioned on the event $\mathcal{E}$. So, they will be true with a probability greater than $(1 - 2 \delta)$. \\

We now give the proof of Lemma \ref{lem:shrunk-polytope-subset}.

\begin{proof}[Proof of Lemma \ref{lem:shrunk-polytope-subset}]

To show that $\X^{s}_{\textnormal{in}}$ is non-empty, we will show that $x^{s}$ is an element of $\X^{s}_{\textnormal{in}}$ for $T_{0} \geq  \frac{8 \beta^{2}_{T}(\delta) L^{2}}{\gamma^{2}\sigma^{2}_{\zeta} (\Delta^{s})^{2}}$. For $x^{s}$ to be an element of $\X^{s}_{\textnormal{in}}$, we need 
\begin{align*}
    a^{\top}_{i} x_{s} + \tau_{\textnormal{in}} \leq b_{i}, \forall i \in [1, m]~~ \implies ~~  \tau_{\textnormal{in}} \leq \min_{i} ~(b_{i} -   a^{\top}_{i} x_{s}) = \min_{i}~(b_{i} - b^{s}_{i}) = \Delta^{s}. 
\end{align*}

For $\tau_{\textnormal{in}} = {2\beta_{T_0}(\delta)L}/{\sqrt{\lambda_{\min}({V_{T_0})}}}$, this is equivalent to satisfying the condition $  \lambda_{\min}({V_{T_0}}) \geq \frac{4 \beta^{2}_{T_0}(\delta) L^{2}}{(\Delta^{s})^{2}}$. Now, conditioned on the event $\mathcal{E}$, this  inequality  is satisfied with  a probability greater than $(1- 2 \delta)$ if $\lambda +  0.5 \gamma^{2} \sigma^{2}_{\zeta} T_{0} \geq \frac{4 \beta^{2}_{T_0}(\delta) L^{2}}{(\Delta^{s})^{2}}$, which is guaranteed for any $T_{0}$ such that
\begin{align}
\label{eq:T-0-LB}
    T_{0} \geq \frac{8 \beta^{2}_{T}(\delta) L^{2}}{\gamma^{2}\sigma^{2}_{\zeta} (\Delta^{s})^{2}}.
\end{align}
Please note that the above lower bound on $T_{0}$ also satisfies the lower bound condition for the result of Lemma \ref{lem:lb-on-lambda-min} to be true when $\Delta^{s}$ is small or $T$ is large, which is typically the case. So, when  $T_{0}$ satisfies the condition \eqref{eq:T-0-LB},  $x^{s} \in \X^{s}_{\textnormal{in}}$, and hence  $\X^{s}_{\textnormal{in}}$ is non-empty.

To show that $\X^{s}_{\textnormal{in}} \subset \hat{\mathcal{X}}^{s}$, consider an arbitrary $x \in \X^{s}_{\textnormal{in}}$. Then, by definition, $a^{\top}_{i} x+  \frac{2\beta_{T_0}(\delta)L}{\sqrt{\lambda_{\min}({V_{T_0})}}} \leq b_{i}$. Now, 
\begin{align}
    \hat{a}^{\top}_{i} x + \beta_{T_0}(\delta) \left\|x \right\|_{V_{T_0}^{-1}} &= a^{\top}_{i} x + (\hat{a}^{\top}_{i} x - a^{\top}_{i} x)  + \beta_{T_0}(\delta) \left\|x \right\|_{V_{T_0}^{-1}} \\ &\leq a^{\top}_{i} x +  \norm{ \hat{a}_{i} - a_{i}  }_{V_{T_0}} \norm{x}_{V^{-1}_{T_0}} + \beta_{T_0}(\delta) \left\|x \right\|_{V_{T_0}^{-1}} \nonumber \\
   &\stackrel{(i)}{\leq} a^{\top}_{i} x + 2 \beta_{T_0}(\delta) \left\|x \right\|_{V_{T_0}^{-1}} \nonumber \\
    &\leq  a^{\top}_{i} x +  \frac{2 \beta_{T_0}(\delta) \norm{x}_{2} }{\sqrt{\lambda_{\min}(V_{T_0})}} \stackrel{(ii)}{\leq} a^{\top}_{i} x +  \frac{2 \beta_{T_0}(\delta) L}{\sqrt{\lambda_{\min}(V_{T_0})}} \leq b_{i},
\end{align}
where we get $(i)$ conditioned on the event $\mathcal{E}$ and $(ii)$ by using the fact that $\norm{x}_{2} \leq L$. This implies that $x \in \hat{\mathcal{X}}^{s}$. Since $x \in \X^{s}_{\textnormal{in}}$ is arbitrary, we get $\X^{s}_{\textnormal{in}} \subset \hat{\mathcal{X}}^{s}$. This also immediately implies that $\norm{\Pi_{\hat{\mathcal{X}}^{s}}(x^{\star}) - x^{\star}} \leq  \norm{\Pi_{\X^{s}_{\textnormal{in}}}(x^{\star}) - x^{\star}}$.
\end{proof}

We now give the proof of Proposition \ref{prop:term-3-regret}

\begin{proof}[Proof of Proposition \ref{prop:term-3-regret}]
Conditioned on the event $\mathcal{E}$,
\begin{align*}
\sum^{T}_{t=T_{0}+1} f_t(\hat{x}^\star) - f_t(x^\star) &\stackrel{(i)}{\leq}  G (T - T_{0}) \norm{\hat{x}^\star - x^{\star}} =  G (T - T_{0}) \norm{\Pi_{\hat{\mathcal{X}}^{s}}(x^{\star}) - x^{\star}}  \\
&\stackrel{(ii)}{\leq} G T \norm{\Pi_{\X^{s}_{\textnormal{in}}}(x^{\star}) - x^{\star}}  \stackrel{(iii)}{\leq} G T \frac{\sqrt{d}}{C(A, b)}  \tau_{\textnormal{in}}  \stackrel{(iv)}{=} G T \frac{\sqrt{d}}{C(A, b)}   \frac{2\beta_{T_0}(\delta)L}{\sqrt{\lambda_{\min}({V_{T_0})}}},
\end{align*}
where $(i)$ is by using Assumption \ref{as:cost-function}, $(ii)$ from Lemma \ref{lem:shrunk-polytope-subset}, $(iii)$ is by using  Lemma \ref{lem:fereydounian2020safe-lemma}, and $(iv)$ is by applying the value of $\tau_{\textnormal{in}}$ used in Lemma \ref{lem:shrunk-polytope-subset}.

Also, conditioned on the event $\mathcal{E}$,  we have $\lambda_{\min}(V_{T_{0}}) \geq \lambda +  0.5 \gamma^{2} \sigma^{2}_{\zeta} T_{0}$. Using this in the above inequality, we get
\begin{align*}
    \sum^{T}_{t=T_{0}+1} f_t(\hat{x}^\star) - f_t(x^\star) \leq G T \frac{\sqrt{d}}{C(A, b)}   \frac{2\beta_{T_0}(\delta)L}{\sqrt{\lambda +  0.5 \gamma^{2} \sigma^{2}_{\zeta} T_{0}}} \leq G T \frac{\sqrt{d}}{C(A,b)}   \frac{2\beta_{T_0}(\delta)L}{\sqrt{0.5 \gamma^{2} \sigma^{2}_{\zeta} T_{0}}}
\end{align*}
Reordering the terms, we get the stated result.

\end{proof}

\subsection{Proof of Theorem \ref{thm:main-theorem}}

\begin{proof}[Proof of Theorem \ref{thm:main-theorem}]
We first prove the safety guarantee. For $t \in [1, T_{0}]$, $x_{t} \in \mathcal{X}^{s}$  by Lemma \ref{lem:safe-exploration-lemma-1}. For $t > T_{0}$, the SO-PGD algorithm performs online projected gradient descent with respect to the set $\hat{\X}^{s}$.  So $x_{t} \in \hat{\X}^{s}$ for $t \in [T_{0}+1, T]$. Now, by Lemma \ref{lem:conservative-subset}, $\hat{\X}^{s} \subset \mathcal{X}^{s}$ with a probability greater than $(1-\delta)$. So, $x_{t} \in \mathcal{X}^{s}, \forall t \in [1, T],$ with a probability greater than $(1 - \delta)$. \\

We now prove the regret bound. From the regret decomposition in  \eqref{eq:SO-PGD_regret_decompos}, we have $R(T) = \textnormal{Term I} + \textnormal{Term II} + \textnormal{Term III}$. Using the upper bound for Term I from \eqref{eq:regret-tem-1}, the upper bound for Term II from Proposition \ref{prop:opgd-regret}, and the upper bound for Term III from Proposition \ref{prop:term-3-regret}, we get
\begin{align}
\label{eq:mainthm-pf-st1}
    R(T) &\leq 2 L G T_{0} + 2 L G \sqrt{T} +\frac{L G \sqrt{8 d}}{C(A,b) \sqrt{\gamma^{2} \sigma^{2}_{\zeta}}}   \frac{\beta_{T}(\delta) T}{\sqrt{  T_{0}}},
\end{align}
with a probability greater than $(1 - 2 \delta)$, for $T_{0} \geq \frac{8 \beta^{2}_{T}(\delta) L^{2}}{\gamma^{2}\sigma^{2}_{\zeta} (\Delta^{s})^{2}}$.  We will now select $T_{0} = T^{2/3}$. To ensure the lower bound condition on $T_{0}$ given in Proposition  \ref{prop:term-3-regret}, it is sufficient to have 
$T_{0} = T^{2/3} \geq \frac{8 \beta^{2}_{T} L^{2}}{\gamma^{2}\sigma^{2}_{\zeta} (\Delta^{s})^{2}}$, which is equivalent to having 
\begin{align}
\label{eq:mainthm-pf-st2}
    T \geq \left( \frac{\sqrt{8} \beta_{T}(\delta) L}{\gamma \sigma \Delta^{s}} \right)^{3}.
\end{align}

Now, using $T_{0} = T^{2/3}$ in \eqref{eq:mainthm-pf-st1}, we get
\begin{align}
\label{eq:mainthm-pf-st3}
    R(T) &\leq 2 L G T^{2/3} + 2 L G \sqrt{T}   +\frac{L G \sqrt{8 d}}{C(A,b) \sqrt{\gamma^{2} \sigma^{2}_{\zeta}}} \beta_{T}(\delta) T^{2/3}.
\end{align}

\end{proof}

\subsection{Additional Simulation Results}

\paragraph{Resource Allocation with Safety Constraints:} Here, we consider the cost function that together with a set of linear inequality constraints of the form $Ax \leq b$ is a representative of problems arising in resource allocation \citep{yu2020low, ibaraki1988resource}. In particular, we consider the cost functions $\{f_{t}\}$ used in \citep{yu2020low}, where $f_{3,t} = c_t^\top x$ and  $c_t = c_{1,t} + c_{2,t} + c_{3,t}$. Here, each component of  $c_{1,t}$ is uniformly sampled from the interval $\left[-t^{1 / 10},+t^{1 / 10}\right]$; each component of $c_{2,t}$ is uniformly sampled from $\left[-1,0\right]$; and  $c_{3,t}(i) = \left(-1\right)^{p(t)}, \forall i \in [1, d]$, where $p(t)$ is a random permutations of the integers  in the set $[1,T]$.   We use $d=2$, and the same constraint polytope used in Section \ref{sec:simulation-main} formed by $A = [1,0; -1,0; 0,1; 0,-1]$ and $b = x_{\max}\times[1;1;1;1]$, where we choose $x_{\max} = 3$. We now include regret plots for $T=10^5$ time steps for $f_3$ in Fig. \ref{fig:neely_regret_plots}.  

We perform an additional experiment to show that our algorithm works well for different  safe action sets. We choose the same cost functions of the form $f_3$ described above and a triangular shaped true safe set $\X^s = \{x \in \R^2: Ax \leq b\}$ such that  $A = [1,~1;~-1,~0;~0,~-1]$ and $b = [1,~0,~0]^\top$. We choose $x^s = [0.25,0.25]^\top$. We run SO-PGD for this setup for $T=10^4$ time steps. The results from this experiment are recorded in Fig. \ref{fig:neely_triangle}. We observe, as before, that all the exploratory actions (represented by blue circular region around $x^s$) are safe (see Fig. \ref{fig:neely_triangle}.(a)). The whole optimization trajectory lies inside $\X^s$ (see Fig. \ref{fig:neely_triangle}.(b)). The regret performance is shown in Fig. \ref{fig:neely_triangle}.(c), and Fig. \ref{fig:neely_triangle}.(d). As expected, $R(t)/t \rightarrow 0$ and $R(t)/t^{2/3}$ tends to a constant value.

 

\begin{figure*}[h]
\begin{center}
    \includegraphics[width=1\textwidth,trim=10 130 10 170, clip]{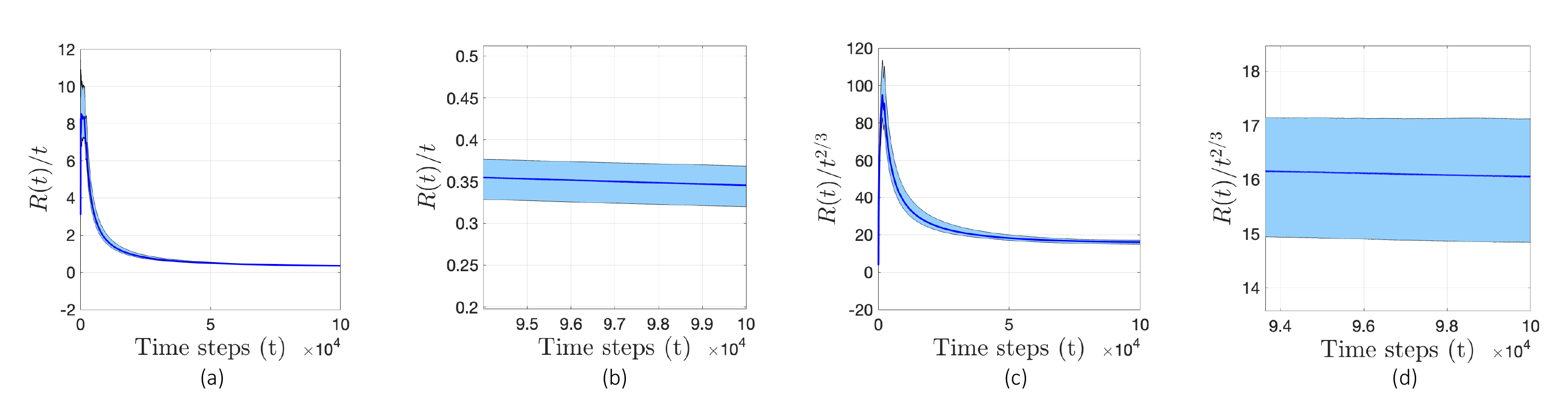}
    \caption{(a) $R(t)/t$ vs $t$ for $f_3$. (b) Zoomed in version of $R(t)/t$ vs $t$ for $f_3$. (c) $R(t)/t^{2/3}$ for $f_3$. (d)  Zoomed in version of $R(t)/t^{2/3}$ vs $t$ for $f_3$. All the plots are averaged over 4 random realizations. The light blue region shows the error (max - min) in mean value.}
    \label{fig:neely_regret_plots}
\end{center}
\end{figure*}
\begin{figure*}[h!]
\begin{center}
    \includegraphics[width=1\textwidth,trim=10 140 10 165, clip]{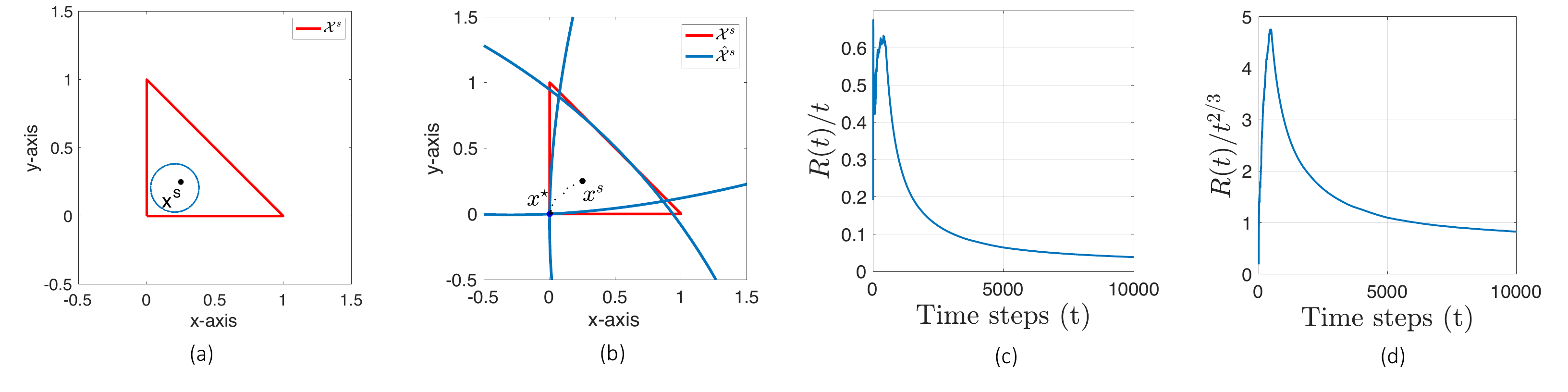}
    \caption{(a) Shows $\X^s$ and exploratory actions.  (b) Shows estimated safe set, it is the interior of blue curves. The black dots from $x^s$ to $x^\star$ denote the optimization trajectory. (c) $R(t)/t$ for $f_3$ for triangular safe set. (d) $R(t)/t^{2/3}$ for $f_3$ for triangular safe set.}
    \label{fig:neely_triangle}
\end{center}
\end{figure*}

\paragraph{Data center scheduling using electricity market price:} Here, we consider a problem motivated from a real-world application  using real-world data. We note that this is the safe OCO version of the  problem considered in \citep{wei2020online}.

The goal is to minimize the cost and maximize the service in a geographically distributed data center.  Let the data center consist of five servers, each of which are located in the following zones of New York: Genesee, Central, North, Mohawk Valley and West, denoted by the index $k=[1, 5]$, respectively. Each zone has different hourly rates for electricity, known as Location Based Marginal Price (LBMP). We obtain the  LBMP data (\verb+$/MWhr+ rates) from the New York Independent System Operator (ISO)'s publicly available energy market and operational data webpage ({\color{blue}\href{https://www.nyiso.com/energy-market-operational-data}{https://www.nyiso.com/energy-market-operational-data}}).  At any hour $t \leq T$, the objective of the data center scheduling program is to choose an action $x_t \in [0,30]^5$ that determines the following:
\begin{enumerate}
    \item The cumulative cost of electricity used in an hour by all the zones, given by $h_{t,1} = c_t^\top x_t$ where $c_t \in \R^5$ is obtained from LBMP data of the five zones.
    \item The cumulative number of jobs served by all the zones, given by $h_{t,2} = \sum_{k=1}^5 \mathbb{E}[D_k(x_t^k)]$. Here, $D_k(\cdot)$ is a Pareto (power law) distribution of mean $8 \log(1+4x_t^k)$ where $x_t^k$ is the $k$th dimension of the action vector $x_{t}$ (i.e., the action corresponding to $k^\text{th}$ zone). 
\end{enumerate}
Let the number of incoming jobs per hour be given by a Poisson distribution with mean equal to 100.
Then, the net objective is to obtain a sequence of $x_t$ such that
\begin{align}
   \sum_{t=1}^T f_{t}(x_{t}) =  \sum_{t=1}^T \Big( c^\top x_t + \lambda (100 - \sum_{k=1}^5 8 \log(1+4x_t^k)) \Big) \label{eq:our_electricity_min}
\end{align}
Here, $\lambda$ is a known constant that balances the objectives of electricity cost minimization and job service maximization. For $T=10000$, we choose $\lambda = 5.7720$. Note that different from \citep{wei2020online}, we include $h_{t,2}$ in the objective rather than as a separate constraint. The safe set of actions is unknown to the scheduler and is required to be learned in a safe manner

We run our  SO-PGD algorithm for this data center scheduling problem with $f_t(x_t) = c^\top x_t + \lambda (100 - \sum_{k=1}^5 8 \log(1+4x_t^k))$, and include the results in Fig. \ref{fig:electricity_pricing_data}. SO-PGD does not violate any constraints while balancing the average money spent and the average number of unserved jobs. 
\begin{figure*}[h!]
\begin{center}
    \includegraphics[width=0.99\textwidth,trim=10 180 0 170, clip]{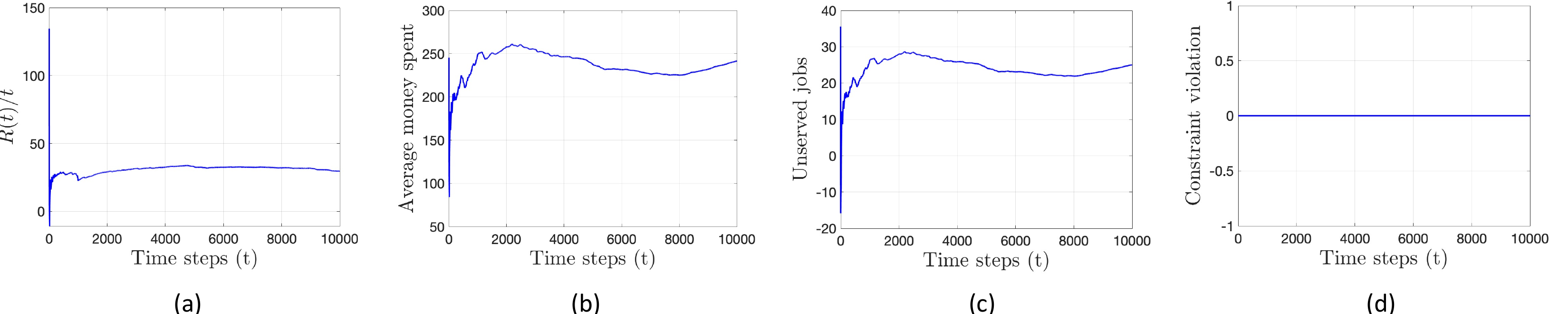}
    \caption{(a) Cumulative regret of $f_t$ over T=10000 time steps. (b) Average money spent as a function of time. (c) Number of average unserved jobs as a function of time. (d) Number of constraint violations as a function of time. Note that, at any time step $t$, our algorithm achieves zero violation of constraint.}
    \label{fig:electricity_pricing_data}
\end{center}
\end{figure*}

\end{document}